\title{Morphological Network: How Far Can We Go with Morphological Neurons?}
\def\eg{\emph{e.g}\bmvaOneDot}
\def\etal{\emph{et al}\bmvaOneDot}
\newtheorem{lemma}{Lemma}
\newtheorem{theorem}{Theorem}
\newtheorem{proposition}{Proposition}
\newtheorem{definition}{Definition}
\newcommand{\newterm}[1]{{\bf #1}}
\def\figref#1{figure~\ref{#1}}
\def\Figref#1{Figure~\ref{#1}}
\def\eqref#1{equation~\ref{#1}}
\def\1{\bm{1}}
\def\vs{{\bm{s}}}
\def\vw{{\bm{w}}}
\def\vx{{\bm{x}}}
\def\evs{{s}}
\def\evx{{x}}
\DeclareMathAlphabet{\mathsfit}{\encodingdefault}{\sfdefault}{m}{sl}
\SetMathAlphabet{\mathsfit}{bold}{\encodingdefault}{\sfdefault}{bx}{n}
\newcommand{\R}{\mathbb{R}}
\begin{document}

\maketitle

\begin{abstract}
Morphological neurons, that is morphological operators such as dilation and erosion with learnable structuring elements, have intrigued researchers for quite some time because of the power these operators bring to the table despite their simplicity. These operators are known to be powerful nonlinear tools, but for a given problem coming up with a sequence of operations and their structuring element is a non-trivial task. So, the existing works have mainly focused on this part of the problem without delving deep into their applicability as generic operators. A few works have tried to utilize morphological neurons as a part of classification (and regression) networks when the input is a feature vector. However, these methods mainly focus on a specific problem, without going into generic theoretical analysis. In this work, we have theoretically analyzed morphological neurons and have shown that these are far more powerful than previously anticipated. Our proposed morphological block, containing dilation and erosion followed by their linear combination, represents a sum of hinge functions. Existing works show that hinge functions perform quite well in classification and regression problems. Two morphological blocks can even approximate any continuous function. However, to facilitate the theoretical analysis that we have done in this paper, we have restricted ourselves to the 1D version of the operators, where the structuring element operates on the whole input. Experimental evaluations also indicate the effectiveness of networks built with morphological neurons, over similarly structured neural networks.

\end{abstract}

\section{Introduction}
Mathematical morphology is a set and lattice theoretic technique for the analysis of geometrical structures. Although it originated from the theoretical study of the geometry of porous materials, it is currently extensively used in the domain of digital image processing. 
It serves as a non-linear tool for processing digital images. 
Morphological operators decompose objects or shapes into meaningful parts which helps in understanding them in terms of the elements. Since the identification of objects and their features are directly correlated with their shapes and arrangement, morphological methods are quite suited for visual tasks~\cite{Haralick1992computer}. 
However, coming up with a sequence of transformation and their parameters for a given problem is not straightforward and requires expert knowledge of the problem. 
To this end, researchers have tried to automatically learn the parameters (structuring element) of mathematical morphology operators (i.e., dilation and erosion). These learnable structures are termed as \emph{morphological neurons}. Although this idea is motivated by the use of these operators in image processing, they are quite generic and can be used for tasks like classification~\cite{ritter_introduction_1996,sussner_morphological_2011} and regression~\cite{de_a._araujo_morphological_2012}. on applying morphological neurons for specific tasks.  Theoretical analysis of this structure and its properties are lacking in the literature. 
Intending to fill this gap, in this paper, we have theoretically analysed the properties of morphological neurons and shown that a specific arrangement of the neurons can approximate any continuous function. To be more precise, we have defined a structure called a \emph{Morphological Block} and shown that a sequence of two morphological blocks can work as a universal approximator. However, to facilitate the theoretical analysis, in this paper, we have restricted ourselves to the 1D version of the morphological operators, where the operators work over the whole input at once, not locally. 



The contributions of this work can be summarized as follows.
\begin{enumerate}
\item We have theoretically analyzed the properties of the morphological neurons and have shown that not all sequences of neurons are useful as some of them may be represented using fewer neurons.
\item One such useful sequence is a layer of both dilation and erosion operations, followed by a layer computing a linear combination of the outputs. We call this a \emph{Morphological Block}.
\item We have shown that a morphological block represents a sum of hinge functions. Sum hinge functions work well for tasks like regression, classification and function approximation~\cite{breiman1993hinging}.
\item We have proved that a sequence of two morphological blocks can approximate any continuous function over arbitrary compact sets.
\item We have shown due to the nonlinear nature of these operators, the neurons can learn more  complex decision boundaries with a similar number of parameters. 
\end{enumerate}

The rest of the paper is organized as follows. Existing works that has experimented with morphological neurons are briefly outlined in Section~\ref{sec:rel_work}. In Section~\ref{sec:morph_net}, we describe morphological neurons and theoretically analyse their properties. Section~\ref{sec:results} provides empirical validation of the proposed structure. 
Finally, concluding remarks are presented in Section~\ref{sec:conclustion}.

\section{Related works}
\label{sec:rel_work}
The use of morphological operations in a learning framework is first proposed by Davidson and Hummer~\cite{davidson_morphology_1993} in their effort to learn the structuring element of dilation operation on images. A similar effort has been made to learn the structuring elements in more recent work by Masci \etal{}~\cite{masci2013learning}. The use of morphological neurons for problems other than images is first proposed by Ritter and Sussner~\cite{ritter_introduction_1996}. They propose to use single-layer network architecture and focused only on the binary classification task. To classify the data, their proposed network is able to learn two axis-parallel hyperplanes as the decision boundary. This single-layer architecture has later been extended to two-layer architecture by Sussner~\cite{sussner_morphological_1998}. This two-layer architecture can learn multiple axis-parallel hyperplanes, and therefore is able to solve arbitrary binary classification tasks. But, in general, the decision boundaries may not be axis-parallel, and so, a large number of hyperplanes may need to be learned by the network. So, Barmpoutis and Ritter~\cite{barmpoutis_orthonormal_2006} proposed to learn an additional rotational matrix that rotates the input before trying to classify data using axis-parallel hyperplanes. 
In a separate work by Ritter \etal{}~\cite{ritter_two_2014} the use of $L^1$ and $L^\infty$ norm has been proposed as a replacement of the $\emph{min/max}$ operation of dilation and erosion in order to smooth the decision boundaries. 
Ritter and Urcid~\cite{ritter_lattice_2003} introduced the dendritic structure of biological neurons to the morphological networks. This structure creates hyperbox-based decision boundaries instead of hyperplanes. The authors have proved that hyperboxes can estimate any compact region and, thus, any two-class classification problem can be solved. A generalization of this structure to the multiclass case has also been done by Ritter \etal{}~\cite{ritter_learning_2007}. Experimentation with network architecture has also been attempted by Sussner and Esmi~\cite{sussner_morphological_2011}, where they propose a new structure called morphological neurons with competitive learning. In this setting, the \emph{argmax} operator is utilized at the output of several neurons to implement the winner-take-all strategy. The authors claim with this setup the network is able to learn complex decision boundaries. 
=
Methods mentioned till now, employ special optimization techniques to learn the parameters since the $\max$ and $\min$ operations employed by dilation and erosion are not differentiable. So, altogether different strategies have been proposed to overcome this issue. 
Ara\'{u}jo~\cite{de_a._araujo_morphological_2012} utilized network architecture similar to morphological neurons with competitive learning to forecast stock markets. The \emph{argmax} operator was replaced with a linear activation function so that the network is able to regress forecasts and the gradient descent could be utilized for training. 
For morphological neurons with dendritic structure, Zamora and Sossa~\cite{zamora_dendrite_2017} proposed to replace the \emph{argmax} operator with a softmax function, in order to utilize the gradient descent optimizer. However, more recent methods don't consider this a hindrance. The gradient is computed where it is possible and taken to be 0 at other places.

The more recent works employing morphological neurons take altogether different approaches in using morphological operations.
Franchi \etal{}~\cite{franchi2020deep} proposed to utilize morphological operations as layers within neural networks. They have shown the pooling layer in CNNs can be replaced with a learned morphological pooling, and using CNNs with only morphological layers works well in denoising images. 
Nogueira \etal{}~\cite{nogueira2019introduction} proposed to utilize morphological operations to be able to learn novel deep features while training the network end-to-end with gradient descent. The authors' have shown experimentally that these features work well for different image classification tasks. 
Islam \etal{}~\cite{aminul2019deep} proposed using morphological hit-or-miss transform to build networks. Mondal \etal{}~\cite{mondal2020image} introduced the opening-closing network for image de-raining and dehazing using morphological opening and closing operations. Limonova \etal{}~\cite{limonova2020bipolar} proposed new morphological neurons called, bipolar morphological neurons. The authors claim to achieve better recognition results compared to neural networks. 

Although morphological neurons have been utilized in various ways for specific applications, their generic theoretical justification is scarce to date. It is still an open question how morphological networks should be designed so that they become a generic tool that can solve any learning problem.
In the following subsections, we have tried to answer these questions.

\section{Morphological Neurons}
\label{sec:morph_net}

In this section, we first introduce morphological neurons and their properties. Then we define the Morphological Block and show that it computes a sum of hinge functions. As pointed out in \cite{breiman1993hinging}, hinge functions are a powerful alternative for classification, regression and function approximation. Although the function approximation capability of a morphological block is not known, we have proved that by using two morphological blocks sequentially, we can approximate any function.

\subsection{Dilation and Erosion neurons}
\label{sec:de_neurons} 
Dilation and Erosion neurons are the two most basic morphological neurons because all other morphological operations can be represented as a composition of these two. Note that, we are utilizing the 1D version of these operations to facilitate theoretical analysis.
Given an input $\vx \in \R^d$ and a structuring element $\vs \in \R^{d}$, the operation of \newterm{dilation} ($\oplus$) and \newterm{erosion} ($\ominus$) neurons are defined, respectively, as 
\begin{align}
    \vx \oplus \vs  = \max_{k}\{\evx_k+\evs_k\}, \\
    \vx \ominus \vs  = \min_{k}\{\evx_k-\evs_k\},  \label{eq:dilationerosion}
\end{align}
where  $\evx_k$ denotes $k^{th}$ element of  input vector $\vx$. After computing dilation and erosion we may set a \textit{limiter} or \textit{bias}, say, $\evs_{d+1}$ to compute final output from dilation and erosion neurons by $\max\{ {\vx \oplus \vs }, \evs_{d+1} \}$ and $\min\{ {\vx \ominus \vs }, -\evs_{d+1} \}$ respectively. Note that this ensures $\evs_{d+1}$ to be the lower bound of the output of the dilation neuron, whereas it is the upper bound for the erosion; hence, the term `limiter'. 
Alternatively, we can write it as follows. Let 0 is appended to the input $\vx$, i.e, $\vx' = [\vx, 0]^T$ and  $\evs_{d+1}$ is appended to $\vs$, then
\begin{align}
     \max\{ {\vx \oplus \vs }, \evs_{d+1} \}&=\max\{\max_{k}\{\evx_k+\evs_k\},\evs_{d+1}\} =\max_{{k=1,\dots,d+1}}\{\evx'_k+\evs'_k\}
                    =\vx' \oplus \vs'       \label{eq:dilation_bias} 
\end{align}
Where $\evs'_k$ is a element of structuring element  $\vs'$. Similarly we can get $\vx' \ominus \vs'$. It may be argued that ${d+1}^{th}$ component is selected if the input has no effect on the output or the function. 
In these neurons, the structuring element ($\vs'$) is learned in the training phase.

The $\max$ and $\min$ operators used in the dilation and erosion neurons are only piece-wise differentiable. As a result, only a single element of the structuring element is updated at each iteration. To overcome this problem we propose to use the soft version of $\max$ and $\min$~\cite{cook2011basic} to define \emph{soft dilation} and \emph{soft erosion} neurons as follows.
\begin{align}
    \vx' \hat{\oplus} \vs'  = {\frac{1}{\beta}\log\left( \sum_{k+1} e^{(\evx'_k+\evs'_k)\beta}\right)},  \\
    \vx' \hat{\ominus} \vs'  = -{\frac{1}{\beta}\log\left(\sum_{k+1} e^{(\evs'_k-\evx_k)\beta}\right)},
    \label{eq:softdilationerosion}
\end{align}
where $\hat{\oplus}$ and $\hat{\ominus}$ denote the soft dilation and soft erosion, respectively, and $\beta$ is the ``hardness'' of the soft operations. The soft version can be made close to its ``hard'' counterpart by making $\beta$ large enough \cite{cook2011basic}. Henceforth, for notational convenience, we use $\vx$ and $\vs$ to denote input and structuring element respectively for dilation (or erosion) with a limiter. In other words, the dilation and erosion neurons include the limiter. 

\subsection{Gradient of Morphological Neurons} 
\label{sec:decision_boundary}
Network build using morphological neurons can be trained using the backpropagation algorithm, provided we are able to find their derivative.
The $\max$ and $\min$ operations of dilation and erosion, respectively, are not differentiable. To be precise, they are not differentiable when the arguments to the $\max$ or $\min$ operation are equal. However, this rarely occurs in practice. So, we may define the derivative of the dilation and erosion operation in the following way.
\begin{align}
    \frac{\partial z^+}{\partial s_i} = 
    \begin{cases}
        1 & \text{if $x_i+s_i$ is the max}, \\
        0 & \text{otherwise}.
    \end{cases} 
    \quad
    \frac{\partial z^-}{\partial s_i} = 
    \begin{cases} 
        1 & \text{if $x_i-s_i$ is the min}, \\
        0 & \text{otherwise}.
    \end{cases}
    \label{gradeq}
\end{align}
So, the computed gradient is non-zero only in the element for which the maximum (or minimum) is attained. For this reason, the computed loss or error affects only one element of the structuring element for a given sample. 
As a result, only a single neuron is activated in the network and, consequently, only a single weight ($s_i$) is updated at a time. 
This may result in slow convergence of the training of the network. In practice, training a large morphological network is very slow. Soft morphological neurons mitigate this issue to some extent.


\subsection{Equivalence of configurations}
The morphological neurons may be arranged in different ways to create a network with the aim of solving a particular task. But not all of these configurations are useful and some of them may be spurious. The following are true for different network architectures.
\begin{theorem}
If we denote $D_{m_1}E_{m_2}$ as a layer with $m_1$ dilation neurons and $m_2$ erosion neurons and $L$ as a linear combination layer, the following may be said about their configurations. 
\begin{enumerate}[(i)]
    \item The architecture $D_{m_1}E_{0}\rightarrow D_{m_2}E_{0} \rightarrow \cdots \rightarrow D_{m_\ell} E_{0}$ consisting only of dilation layers is equivalent to the architecture $D_{m_\ell}E_{0}$ with a single dilation layer. A similar statement is true if one considers architectures with only purely erosion layers.
    \item The architecture $D_1E_1 \rightarrow D_1$ is not equivalent to $D_1E_0$. Similarly, it is not equivalent to $D_0E_1$, and, consequently, the architectures $D_1E_1 \rightarrow D_1E_1$ and $D_1E_1$ are not equivalent.
    \item The architecture $D_1 E_1 \rightarrow D_1 \rightarrow L$ is not equivalent to $D_1E_0 \rightarrow L$.
    \item The architecture $D_{2}E_{0}\rightarrow D_{0}E_{2} \rightarrow D_1$ is not equivalent to $D_{2}E_{0} \rightarrow D_1$.
\end{enumerate}
\end{theorem}
The proof is provided in the supplementary material.

\subsection{Morphological block}
\label{sec:SL_morph}
Here we define Morphological Block, which is one configuration that can be utilized as a building block for making more complex networks. A Morphological block consists of a layer with dilation and erosion neurons followed by a linear combination of their outputs (\Figref{fig:single_layer_network}). We call the layer of dilation and erosion neurons the \newterm{dilation-erosion layer} and the following layer as the \newterm{linear combination layer}. 
Let us consider a morphological block with $n$ dilation neurons and $m$ erosion neurons in the dilation-erosion layer followed by $c$ neurons in the linear combination layer. Let $\vx \in \R^d$ be the input to the network and $z_{i}^+$ and $z_{j}^-$ be the output of the $i^{th}$ dilation neuron and the $j^{th}$ erosion neuron respectively: 
\begin{align}
    z_{i}^+ &= \vx \oplus \vs_i^+ , \\ 
    z_{j}^- &= \vx \ominus \vs_j^-  \label{eq:erosion_net},
\end{align}
where $\vs_i^+$ and $\vs_j^-$ are the structuring elements of the respective neurons. Note that $i \in \{1, 2, \ldots, n\}$ and $j \in \{1, 2, \ldots, m\}$. The final output of a node in the linear combination layer is computed as 
\begin{equation}
    \mathcal{M}(\vx)=\sum_{i=1}^{n} z_{i}^+\omega_i^{+}+\sum_{j=1}^{m} z_{j}^- \omega_j^{-},
    \label{eq:total}
\end{equation}
where $\omega_i^{+}$ and $\omega_j^{-}$ are the weights of the combination layer. When the network is trained, it learns all $\vs_i^+$, $\vs_j^-$, $\omega_i^{+}$ and $\omega_j^{-}$. 

\begin{figure}
    \centering
    \includegraphics[width=0.8\linewidth ]{./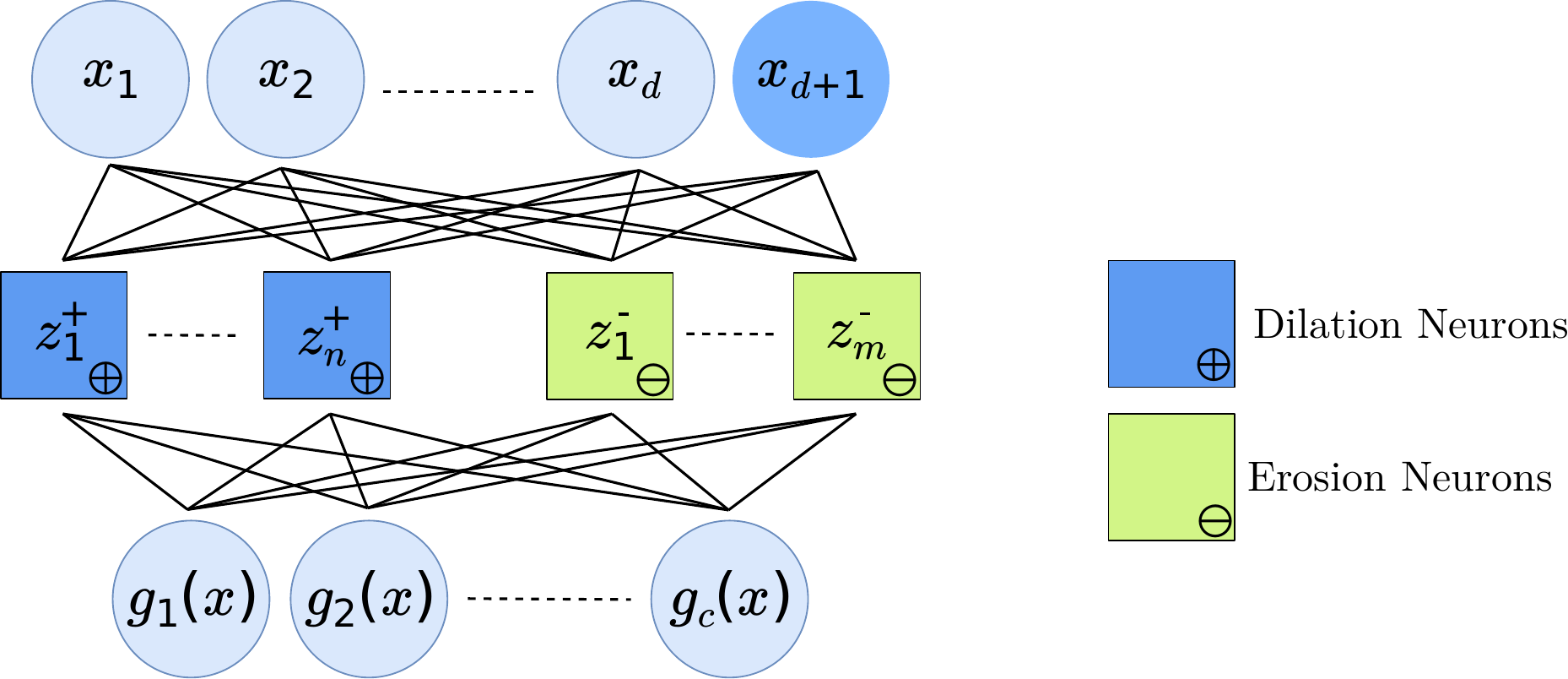}   
    \caption{Architecture of single layer morphological block. It contains an input layer, a dilation-erosion layer with $n$ dilation and $m$ erosion neuron and a linear combination layer with $c$ neurons producing the output. The limiter associated with the input $x_{d+1}=0$.}
    \label{fig:single_layer_network}
\end{figure}

\subsection{Morphological block as a sum of hinge functions}
In this subsection, we show that the simple morphological block can be represented as a sum of hinge functions. 
Hinge functions provide a powerful representation for problems like classification, and regression tasks~\cite{breiman1993hinging}. 
Additionally, we try to develop the intuition behind the morphological block with a toy example.

\begin{definition}[$k$-order Hinge Function \cite{wang2005generalization}]
    A $k$-order hinge function $h^{(k)}(\vx)$ consists of $(k+1)$ hyperplanes continuously joined together. It may be defined as 
    \begin{equation}
        h^{(k)}(\vx) = \pm \max\{\vw_1^{T}\vx+b_1, \vw_2^{T}\vx+b_2, \ldots, \vw_{k+1}^{T}\vx+b_{k+1}\} 
        \label{eq:k-order_hinge}
    \end{equation}
\end{definition}

\begin{proposition}
\label{th:gx_sum_hinge}
The function computed by a Morphological Block (denoted by $\mathcal{M}(\vx)$) with $n$ dilation and $m$ erosion neurons followed by their linear combination, is a sum of multi-order hinge functions.
\end{proposition}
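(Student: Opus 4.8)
The plan is to unpack the definitions of the dilation and erosion neurons and observe that each neuron by itself already computes a hinge function of order $d-1$; the linear combination layer then only rescales and possibly flips the sign of these hinges, so their weighted sum is visibly a sum of hinge functions. First I would rewrite a single dilation neuron as a maximum of $d$ affine functions. Writing $\ve_k$ for the $k$-th standard basis vector of $\R^d$, we have $\evx_k + \evs_{i,k}^{+} = \ve_k^{T}\vx + \evs_{i,k}^{+}$, so that
\begin{equation*}
z_i^+ = \vx \oplus \vs_i^+ = \max_{k \in \{1,\dots,d\}} \{\ve_k^{T}\vx + \evs_{i,k}^{+}\}.
\end{equation*}
Comparing with~\eqref{eq:k-order_hinge}, this is precisely a $(d-1)$-order hinge function carrying the $+$ sign, with $\vw_k = \ve_k$ and $b_k = \evs_{i,k}^{+}$.

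Next I would treat the erosion neurons through the identity $\vx \ominus \vs = -\max_k\{\evs_k - \evx_k\}$ recorded just after~\eqref{eq:erosion}. Since $\evs_{j,k}^{-} - \evx_k = (-\ve_k)^{T}\vx + \evs_{j,k}^{-}$ is again affine in $\vx$, this yields
\begin{equation*}
z_j^- = \vx \ominus \vs_j^- = -\max_{k \in \{1,\dots,d\}}\{(-\ve_k)^{T}\vx + \evs_{j,k}^{-}\},
\end{equation*}
which matches~\eqref{eq:k-order_hinge} with the $-$ sign. Hence every output of the dilation-erosion layer is, on its own, a $(d-1)$-order hinge function.

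It then remains to check that scalar multiplication and addition preserve the sum-of-hinges structure. For a weight $\omega>0$ the factor can be absorbed into the hyperplane parameters, $\omega\max_l\{\vw_l^{T}\vx + b_l\} = \max_l\{(\omega\vw_l)^{T}\vx + \omega b_l\}$, producing another hinge with the same sign; for $\omega<0$ the same absorption of $|\omega|$ together with the leading minus swaps a $+$ hinge into a $-$ hinge (and conversely); and for $\omega=0$ the term simply vanishes. Applying this termwise to~\eqref{eq:total} exhibits $\mathcal{M}(\vx)$ as a sum of $(d-1)$-order hinge functions, which is in particular a sum of multi-order hinge functions, as claimed. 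I do not anticipate a serious obstacle, since the argument is a direct decoding of the definitions; the only point needing care is the sign bookkeeping---namely that negative combination weights flip the $\pm$ label of a hinge---which is handled by pushing $|\omega|$ inside the maximum.
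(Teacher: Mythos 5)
Your proposal is correct and follows essentially the same route as the paper's own proof: identify each dilation/erosion output as a signed maximum of affine functions, then absorb the combination weights into the hyperplane parameters while tracking a $\pm 1$ sign flip for negative weights (the paper does this bookkeeping explicitly via its $\alpha_i^{\pm}$, $\theta_i^{\pm}$, $\rho_{ik}^{\pm}$). The only cosmetic difference is the hinge order: the paper works with the augmented input (appended bias coordinate) and so calls each term a $d$-order hinge, whereas you obtain $(d-1)$-order; either way the claim of a sum of multi-order hinge functions holds.
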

\noindent In fact, we can show that 
\begin{equation}
    \mathcal{M}(\vx) = \sum_{i=1}^{l} \alpha_{i} h^{(d)}_i(\vx), 
    \label{eq:lemma_sum_of_hinge}
\end{equation}
where $l=m+n$, $\alpha_{i} \in \{1,-1\}$ and $h^{(d)}_i(\vx), 1 \le i \le l$, are $d$-order hinge functions. The proof is given in the supplementary material.

\begin{figure}
    \centering
    \includegraphics[width=0.3\linewidth]{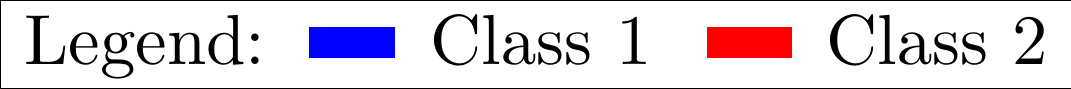}
    \smallskip
    
    \subfigure[NN-ReLU]{\includegraphics[width=0.247\linewidth]{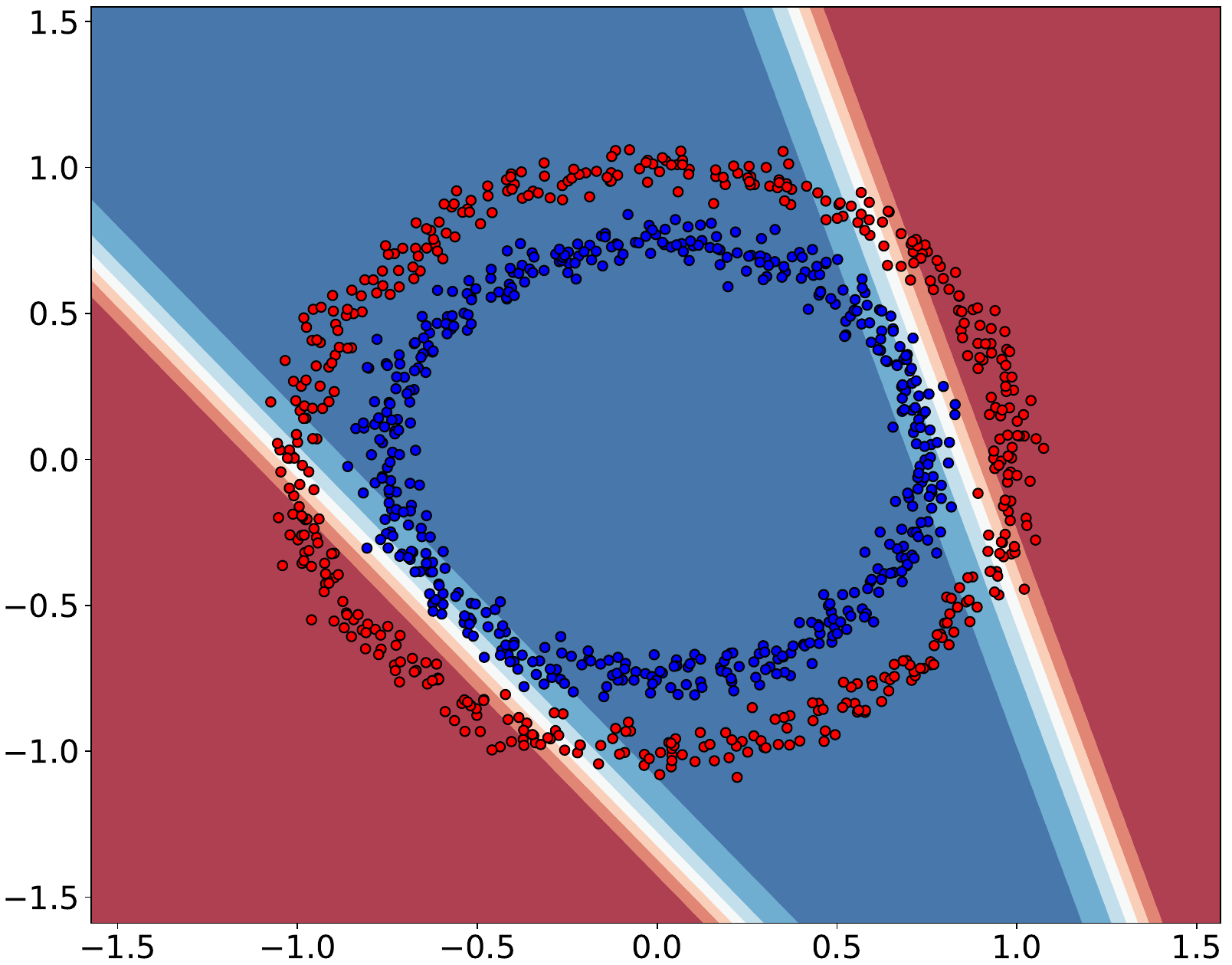}
    \label{fig:decision_nn}}
    \subfigure[Maxout Network]{\includegraphics[width=0.24\linewidth]{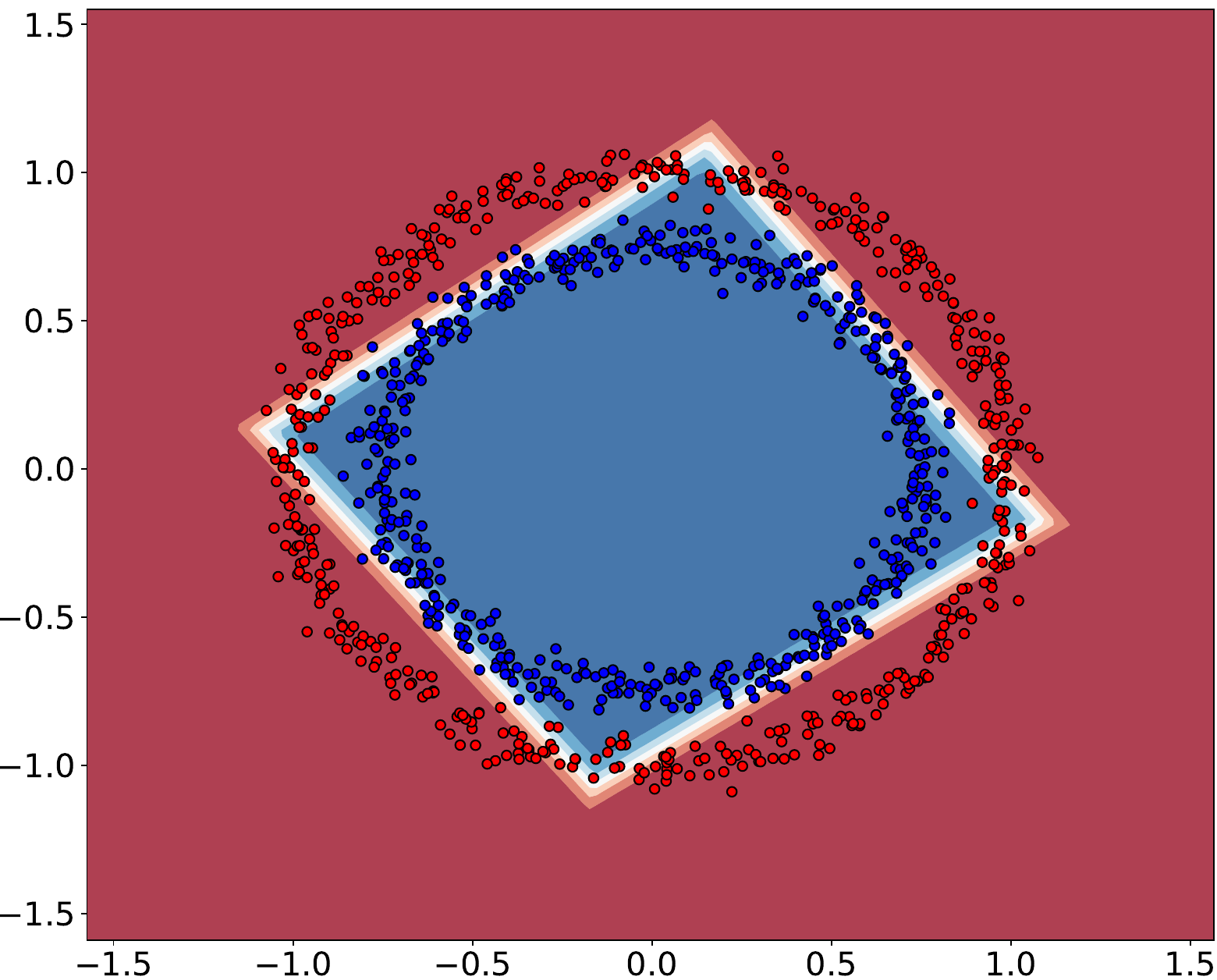}
    \label{fig:decision_maxout}}
    \subfigure[Morphological Block]{\includegraphics[width=0.216\linewidth]{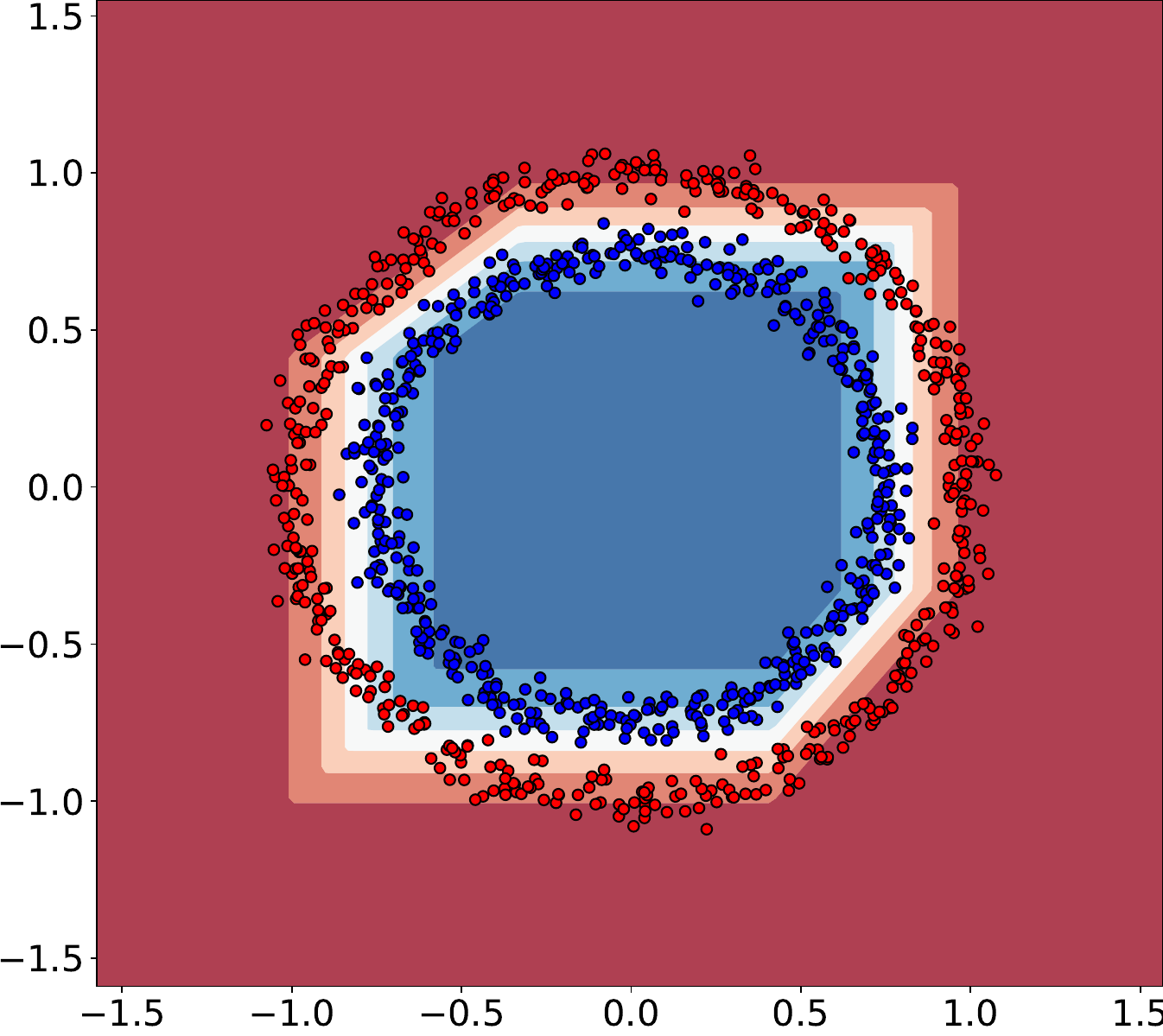}
    \label{fig:decision_morph}}
    \subfigure[Soft Morphological Block]{\includegraphics[width=0.2175\linewidth]{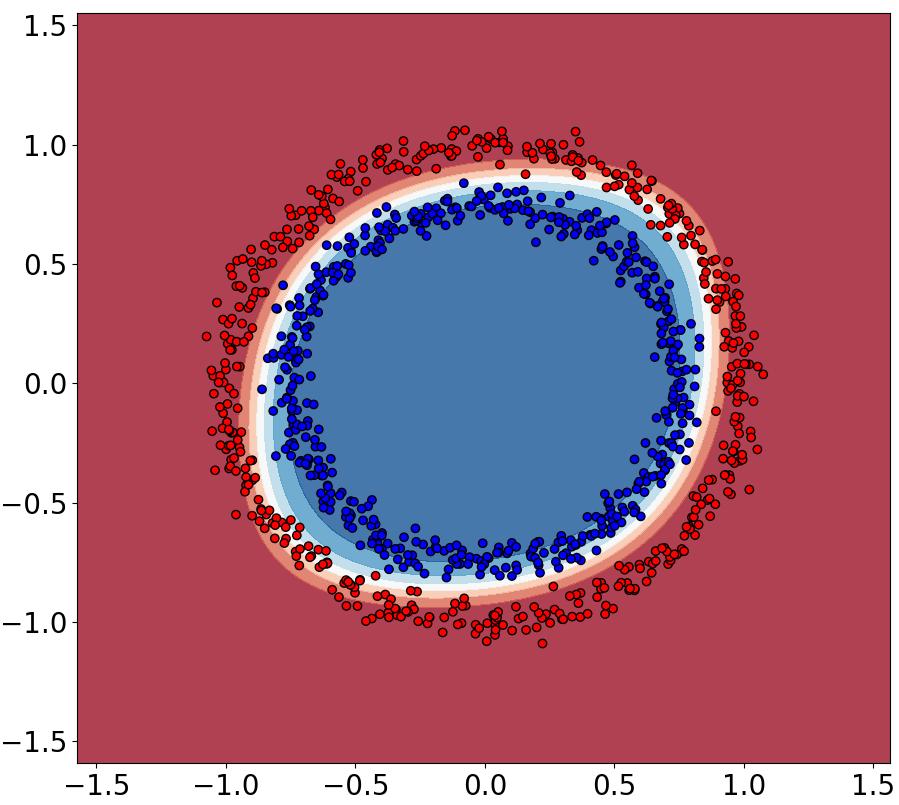}
    \label{fig:decision_morph_soft}}
\caption{Decision boundaries learned by different networks with two hidden neurons. (a)~Baseline neural network is able to learn only two planes. 
(b)~Maxout network is able to learn two more planes with the help of additional parameters. 
(c)~Morphological Block is able to learn more planes with the same number of parameters as NN-ReLU. 
(d)~Using the soft version of the block, smooths the learned decision boundary. 
This further enhances the discrimination capability of the network while retaining the same number of parameters.}
    \label{fig:pmap_circle}
\end{figure}

Since a morphological block computes a sum of hinge functions, it can potentially learn a large number of hyperplanes. It can be seen a morphological block can have maximum  $(d+1)^l-1$ hyperplanes. Out of all those, there can be almost $d! \times \binom{l}{d} \times {(d+1)^{l-d}}$ hyperplanes that are not parallel to any of the axes. (This has been further explained in the supplementary material.) These hyperplanes can act as decision boundaries as has been demonstrated experimentally using a toy dataset representing a two-class problem. 

The toy dataset contains samples that are distributed along two concentric circles, one circle for each class. The circles are centered at the origin.
We compare the results obtained by various networks with two neurons in the hidden layer. It is observed that the baseline neural network (NN-ReLU) fails to classify this data as with two hidden neurons it learns only two hyperplanes, one for each neuron (\Figref{fig:decision_nn}).
The result of maxout network\cite{goodfellow_maxout_2013} is better, because, in this case, the network learns $2k = 4$ hyperplanes as shown in \figref{fig:decision_maxout}. 
Note that with two morphological neurons in the dilation-erosion layer, our network has learned 6 hyperplanes to form the decision boundary (\Figref{fig:decision_morph}). We should get at most 8 hyperplanes from the morphological block. However, out of these only two decision boundaries are placed in any arbitrary orientation in the 2D space, while others are parallel to either of the axes. Whereas using the soft version of dilation and erosion smooths the boundary, making it aligned with the data (\Figref{fig:decision_morph_soft}). 

\subsection{Two Morphological Blocks: An universal approximator}
A single morphological block may be able approximate functions, but we don't know how well is its approximation capability (Supplementary material provides an empirical study on this). However, with two morphological blocks (applied sequentially) any function can be approximated. 
\begin{lemma}
Any linear combination of hinge functions $\sum_{i = 1}^m \alpha_i h^{(k_i)}(\vx)$ can be represented over an arbitrary compact set $K$ as a two sequential morphological block consisting of dilation neurons only.
\end{lemma}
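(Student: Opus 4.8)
The plan is to exhibit an explicit two-block construction and to use the compactness of $K$ to turn dilation neurons into coordinate- and subset-selectors. First I would reduce the target. Writing each hinge as $h^{(k_i)}(\vx) = \epsilon_i \max\{\vw_{i,1}^{T}\vx + b_{i,1}, \ldots, \vw_{i,k_i+1}^{T}\vx + b_{i,k_i+1}\}$ with $\epsilon_i \in \{+1,-1\}$, both the sign $\epsilon_i$ and the coefficient $\alpha_i$ can be absorbed into a single real weight $\beta_i = \alpha_i \epsilon_i$. Hence it suffices to produce, in the dilation layer of the second block, each \emph{positive} maximum $u_i(\vx) = \max_r\{\vw_{i,r}^{T}\vx + b_{i,r}\}$, and then let the final linear-combination layer output $\sum_i \beta_i u_i(\vx)$.

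Next I would build the first block as a coordinate extractor. Since $K$ is compact, there is $R$ with $|\evx_k| \le R$ on $K$ for every coordinate. A single dilation neuron with structuring element $\vs$ satisfying $\evs_k = 0$ and $\evs_j \le -2R$ for $j \ne k$ computes $\max\{\evx_k, \max_{j\ne k}(\evx_j + \evs_j)\}$, and the masked terms obey $\evx_j + \evs_j \le R - 2R = -R \le \evx_k$, so the neuron outputs \emph{exactly} $\evx_k$ on $K$. Using $d$ such neurons, the first dilation-erosion layer reproduces all coordinates of $\vx$, and its linear-combination layer, now having free access to each $\evx_k$, can form every linear part $\vw_{i,r}^{T}\vx$ appearing in any hinge as one of its output nodes.

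Then I would build the second block as a selective-maximum layer. The finitely many linear parts $\vw_{i,r}^{T}\vx$ are continuous, hence bounded on $K$, say within $[-A, A]$; set $B = \max_{i,r}|b_{i,r}|$. For each hinge index $i$ I use one dilation neuron whose structuring element equals $b_{i,r}$ on the input node carrying $\vw_{i,r}^{T}\vx$ and is $\le -(2A + 2B)$ on every other node. On $K$ the active entries satisfy $\vw_{i,r}^{T}\vx + b_{i,r} \ge -A - B$, whereas each masked entry is at most $A - (2A + 2B) = -A - 2B \le -A - B$, so this neuron outputs exactly $u_i(\vx) = \max_r\{\vw_{i,r}^{T}\vx + b_{i,r}\}$; notice the biases are supplied directly by the structuring element, so no input augmentation is needed. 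The linear-combination layer of the second block then returns $\sum_i \beta_i u_i(\vx)$, which equals $\sum_i \alpha_i h^{(k_i)}(\vx)$ identically on $K$. The whole network uses dilation neurons only, together with sign-unrestricted linear-combination layers, as required.

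I expect the only delicate point to be the choice of the large negative structuring-element values, since this is exactly where compactness enters: the bounds $R$, $A$, $B$ guarantee that the masked terms stay below the active ones, so every maximum is reproduced \emph{exactly} rather than merely approximately. Once these bounds are fixed, the two selection arguments are routine, and the representation holds on all of $K$.
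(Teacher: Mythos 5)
Your proof is correct and follows essentially the same strategy as the paper's: compactness supplies bounds that let large negative structuring-element entries turn the first dilation layer into a coordinate extractor and the second into a blockwise maximum selector, with linear-combination layers forming the constituent affine pieces and the final signed sum. The only (harmless) deviations are that you inject the biases $b_{i,r}$ through the second layer's structuring elements rather than through the first linear-combination layer, you make the sign $\pm$ in the hinge definition explicit by absorbing it into the output weights, and your masking constants differ slightly from the paper's $-3C$ and $-3B$.
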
 
\begin{proof}
The proof is given in the supplementary material.
\end{proof}
\begin{theorem}[Universal approximation]
Two morphological blocks applied sequentially, can approximate continuous functions over arbitrary compact sets.
\end{theorem}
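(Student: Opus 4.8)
The plan is to combine the two structural results just established—the Proposition identifying a morphological block with a sum of hinge functions, and the Lemma realizing any linear combination of hinge functions as a two-layer dilation-only Morph-Net—with the classical density of piecewise-linear functions in the space of continuous functions on a compact set. Concretely, fix a continuous function $f : K \to \R$ on a compact set $K \subset \R^d$ and a tolerance $\epsilon > 0$; the goal is to produce a two-layer Morph-Net $\mathcal{N}$ with $\sup_{\vx \in K} |f(\vx) - \mathcal{N}(\vx)| < \epsilon$. The route passes from $f$ to a piecewise-linear (PWL) approximant $g$, then from $g$ to a finite sum of hinge functions, and finally from that sum to a two-layer Morph-Net, where the last step is exactly the Lemma.

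First I would approximate $f$ uniformly on $K$ by a continuous PWL function $g$. Triangulating $K$ by a sufficiently fine simplicial mesh and letting $g$ interpolate $f$ linearly on each simplex gives, by uniform continuity of $f$ on the compact set $K$, an approximant with $\|f - g\|_{\infty,K} < \epsilon$. Next I would write $g$ as a finite sum of hinge functions. Every continuous PWL function admits a difference-of-convex decomposition $g = g_1 - g_2$ with $g_1, g_2$ convex PWL; since a convex PWL function is a pointwise maximum of finitely many affine maps, $g_1$ is a hinge function carrying a $+$ sign and $-g_2$ a hinge function carrying a $-$ sign. Thus $g = \sum_{i=1}^m \alpha_i h^{(k_i)}(\vx)$ with $\alpha_i \in \{+1,-1\}$, which is also the content of the representation theorem of Wang and Sun \cite{wang2005generalization} whose hinge-function definition we adopt.

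Finally I would invoke the Lemma on this particular sum: the linear combination $\sum_{i=1}^m \alpha_i h^{(k_i)}(\vx)$ is representable over the compact set $K$ by a two-layer Morph-Net $\mathcal{N}$ built from dilation neurons only. Hence $\mathcal{N}(\vx) = g(\vx)$ for every $\vx \in K$, and the triangle inequality yields $\sup_{\vx \in K} |f(\vx) - \mathcal{N}(\vx)| = \|f - g\|_{\infty,K} < \epsilon$, which completes the argument.

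The main obstacle is controlling the compact-set qualifier throughout. Both the Proposition and the Lemma equate morphological (max-plus) computations with hinge sums only on bounded regions, so I must ensure the Lemma's construction is instantiated on the very same $K$ on which the PWL approximation is controlled, and that the additive biases introduced when re-expressing each $\max$-of-affine term as dilation neurons remain finite on $K$. A secondary check is finiteness: the difference-of-convex (or Wang--Sun) representation of a PWL function with finitely many linear pieces produces finitely many hinge terms of finite order, guaranteeing that $\mathcal{N}$ has finitely many neurons; I would also confirm that extending $g$ from $K$ to a globally defined continuous PWL function before decomposing does not disturb its values on $K$.
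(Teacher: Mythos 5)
Your proof is correct and its overall architecture is exactly the paper's: reduce the problem to approximating $f$ by a finite sum of hinge functions on $K$, then invoke Lemma~1 to realize that sum exactly as a two-layer dilation-only Morph-Net. The one place you diverge is the density step: the paper simply cites Theorem~3.1 of Breiman \cite{breiman1993hinging} for the fact that sums of hinge functions are uniformly dense in $C(K)$, whereas you prove it from scratch by interpolating $f$ on a fine simplicial mesh to get a continuous piecewise-linear $g$ with $\|f-g\|_{\infty,K}<\epsilon$, and then writing $g$ as a difference of two convex PWL functions, each a maximum of finitely many affine maps and hence a signed hinge function in the sense of the paper's Definition~1. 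This buys you a self-contained argument, an explicit bound on the number of hinge terms (hence finiteness of the network), and the observation that the resulting Morph-Net agrees with $g$ \emph{exactly} on $K$ so that the only error incurred is the PWL interpolation error; the paper's version is shorter but leans entirely on the cited result. Your two flagged technicalities are real but routine: instantiating Lemma~1 on the same $K$ (its construction depends only on a bound $C$ with $|x_\ell|\le C$ on $K$ and on $B=\max_i\sup_K|h^{(k_i)}|$, both finite here), and extending $g$ to a globally defined PWL function before taking the difference-of-convex decomposition, which can be done by triangulating a large simplex containing $K$ without changing $g$ on $K$.
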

\begin{proof}
Continuous functions can be approximated over compact sets by sums of hinge functions (Theorem 3.1 of \cite{breiman1993hinging}). Therefore, by Lemma 1, it follows that any continuous function can be approximated over arbitrary compact sets by two-layer Morph-Nets.
\end{proof}


\section{Experimental Evaluation}
\label{sec:results}
To empirically evaluate the performance of our proposed Morphological block, we have done experiments using various benchmark datasets of several real-world problems. We have compared our results with similarly structured networks because the state-of-the-art methods employ more than just simple neurons to accomplish the results. Also, since our proposed morphological block utilizes 1D morphological operations, the comparison has been done with networks with 1D neurons. So, all the data has been flattened before feeding to the networks. Because of these reasons, we have evaluated our method on MNIST, Fashion-MNIST~\cite{xiao2017fashion}, CIFAR-10, and SVHN dataset only. We have refrained from using large image datasets as a 1D version of the operators won't be able to extract meaningful features from them. Yes, using the 2D version of the morphological operations would indeed be more appropriate for image data, but here our focus is the evaluation of our proposed morphological block, not its 2D version. 


\subsection{MNIST and Fashion-MNIST} 
For MNIST and Fashion-MNIST~\cite{xiao2017fashion} dataset, the network we have utilized contains an input layer and a single morphological block. Only a sigmoid activation has been utilised in the last layer, no other activation function has been used. The morphological block contains 200 dilation and 200 erosion neurons.
Table~\ref{tab:mnists} shows the accuracy of test data after training the network for 300 epochs. We get an average accuracy of $98.43\%$ and $89.84\%$, respectively, on MNIST and Fashion-MNIST datasets. Note that the reported state-of-the-art techniques make use of different data augmentation and pre-processing techniques to train the data.  We have not utilized any of such techniques, but still, we are able to get comparable results.

\begin{table}
    \centering
    \label{tab:mnists}
    \begin{tabular}[c]{lccc}
        \toprule
        \multirow{2}{*}{\textbf{Dataset}} & \multicolumn{3}{c}{\textbf{Test Accuracy}} \\ \cmidrule{2-4}
        & Morph-Net & Soft Morph-Net ($\beta=8$) & Similar Network \\ 
        \cmidrule(r){1-1}\cmidrule(lr){2-2}\cmidrule(lr){3-3}\cmidrule(l){4-4}
        MNIST & 98.39 & 98.90 & \textbf{99.79} \cite{wan2013regularization} \\ 
        Fashion-MNIST & 89.87 & \textbf{89.84} & 89.70 \cite{xiao2017fashion} \\
        \bottomrule
    \end{tabular}
    \caption{Accuracy on MNIST and Fashion-MNIST Datasets using a single hidden layer with 400 morphological neurons.}
\end{table}


\subsection{CIFAR-10 and SVHN}
CIFAR-10~\cite{krizhevsky_learning_2009} and SVHN~\cite{netzer2011reading} are two popular classification datasets that are far more challenging than the previous two. All the networks we have utilized here to report the results, follow a 3-layer architecture: input layer, hidden layer and output layer. For the Maxout network~\cite{goodfellow_maxout_2013}, we have taken $k=2$ which means each hidden neuron has two extra nodes over which the maximum is computed. For the network with (soft) morphological neurons, sigmoid activation has been utilized only in the last layer. Table~\ref{cifar10_svhn_table} shows the mean and standard deviation of test accuracy obtained over 5 runs of 300 epochs each and by varying the number of neurons in the hidden layer. It is seen from the table that the Morph-Nets achieve better accuracy for the CIFAR-10 dataset in all cases. However, for the SVHN dataset, its results are comparable with that of other networks. But for both datasets, the accuracy obtained by Morph-Net stays almost the same across training runs. That is not true for other networks.

\begin{table}
    \centering
    \label{cifar10_svhn_table}
    \scalebox{0.82}{
    \begin{tabular}{m{0.2\linewidth}cccccc}
    \toprule
    \multirow{2}{*}{\textbf{Architecture}} & \multicolumn{2}{c}{l=200} & \multicolumn{2}{c}{l=400} & \multicolumn{2}{c}{l=600}\\
    \cmidrule(r){2-3}\cmidrule(lr){4-5}\cmidrule(l){6-7}
    & CIFAR10 &SVHN & CIFAR10 &SVHN & CIFAR10 &SVHN \\
    \cmidrule(r){1-1} \cmidrule(r){2-2}\cmidrule(lr){3-3}\cmidrule(lr){4-4}\cmidrule(lr){5-5}\cmidrule(lr){6-6}\cmidrule(l){7-7}
    NN-tanh & 46.6 $\pm$ 0.06 &73.9 $\pm$ 0.12  &46.9 $\pm$ 0.04 & 73.9 $\pm$ 0.23 &48.0 $\pm$ 0.05 & 75.6 $\pm$ 0.14  \\
    NN-ReLU & 47.2 $\pm$ 0.11&64.2 $\pm$ 0.88  &48.0 $\pm$ 0.05 & 76.2 $\pm$ 0.32 &48.1 $\pm$ 0.02 & \textbf{79.5 $\pm$ 0.11}  \\
    Maxout-Network ($k = 2$)~\cite{goodfellow_maxout_2013} &46.9 $\pm$ 0.05 &69.4 $\pm$ 0.10  &48.0 $\pm$ 0.10 & 74.1 $\pm$ 0.22 & 46.4 $\pm$ 0.33 & 37.8 $\pm$ 3.15 \\
    Our& 52.0 $\pm$ 0.02& 73.4 $\pm$ 0.03 &53.6 $\pm$ 0.01 & 76.9 $\pm$ 0.03  &54.0 $\pm$ 0.02 & 78.2 $\pm$ 0.03\\
    Our (Soft: $\beta=12$, $20$)&\textbf{53.5 $\pm$ 0.04} & \textbf{74.1 $\pm$ 0.06} &\textbf{55.8 $\pm$ 0.05} &  \textbf{77.0 $\pm$ 0.05} &\textbf{56.9 $\pm$ 0.04} &78.5 $\pm$ 0.05\\
    \bottomrule
    \end{tabular}
    }
    \caption{Test accuracy achieved on CIFAR-10 and SVHN dataset by different networks when the number of neurons ($l$) in the hidden layer is varied. The value  of $\beta$ is 12 and 20 for CIFAR10 and SVHN respectively.}
\end{table}

\section{Conclusion}
\label{sec:conclustion}
In this paper, we have theoretically analysed the morphological neurons and have shown that our proposed morphological block is a good way to arrange morphological neurons. We have also shown, that a morphological block represents a sum of hinge functions and two morphological blocks can approximate any continuous function. This provides the theoretical basis that networks built with morphological neurons are equally capable and it is applicable to a variety of problems. The empirical results also show the applicability of morphological neurons. But there is huge scope for further explorations. Firstly, the networks build with morphological neurons (or morphological blocks) are very slow to train since only a small number of parameters are updated at each iteration (Remember that the gradient is 1 only where the max/min occurs). So, it may also require more number iterations to converge.
Improvements in this regard will greatly boost the scope for further exploration.
Secondly, to facilitate theoretical analysis, we have restricted ourselves to the 1D version of the morphological operations. This can be easily extended to 2D morphological operations to make CNN-like networks, but to compete with the state-of-the-art networks other advanced layers (\eg{} batch norm, drop-out) may need to be adapted for morphological neurons. 



\bibliography{egbib}
\end{document}


\maketitle

\section{Gradient of Soft maximum}
The derivative of soft dilation and erosion operation may be defined as follows.
\begin{align}
    \frac{\delta (\vx \hat{\oplus} \vs)}{\delta s_k}    &= \frac{e^{(\evx_k+\evs_k)\beta}}{\sum_i e^{(\evx_i+\evs_i)\beta}}
    \label{eq:grad_softdilation} \\
    \frac{\delta (\vx \hat{\ominus} \vs)}{\delta s_k}    &= \frac{e^{(\evs_k-\evx_k)\beta}}{\sum_i e^{(\evs_i-\evx_i)\beta}}
    \label{eq:grad_softerosion}
\end{align}

\section{Equivalence of Configurations}
In this section, we prove that some of the arrangements of morphological neurons are equivalent and can be approximated by using a fewer number of neurons. To be able to do that, we first prove a simple lemma. 
\begin{lemma}
Suppose $f$ and $g$ are real-valued functions on $\R^d$. Then $f = g$ if and only if, for all $r \in \R$, one has equality of the sub-level sets:
\[
    f^{-1}(-\infty, r] = g^{-1}(-\infty, r].
\]
\label{lemma_feqg}
\end{lemma}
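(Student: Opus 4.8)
The plan is to prove the two implications separately, with the forward one immediate and the reverse one resting on the observation that a real-valued function is completely determined by its family of sub-level sets.

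For the ``only if'' direction, note that if $f = g$ then for every $r$ the sets $f^{-1}(-\infty, r]$ and $g^{-1}(-\infty, r]$ are cut out by the identical condition, so they coincide; nothing further is needed.

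For the ``if'' direction, the key step is to recover the pointwise value of $f$ from its sub-level sets through the identity
\[
    f(\vx) = \inf\{r \in \R : \vx \in f^{-1}(-\infty, r]\},
\]
which holds because $\{r : f(\vx) \le r\} = [f(\vx), \infty)$ has infimum $f(\vx)$; here finiteness of $f$ is exactly what guarantees this infimum equals the value $f(\vx)$. Applying the same identity to $g$ and invoking the hypothesis that the two families of sub-level sets agree, the right-hand sides coincide at every $\vx$, forcing $f(\vx) = g(\vx)$.

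Alternatively, and perhaps more transparently, I would argue by contradiction: fix $\vx$ and suppose $f(\vx) \ne g(\vx)$, say $f(\vx) < g(\vx)$. Choosing $r = f(\vx)$ places $\vx$ in $f^{-1}(-\infty, r]$ but not in $g^{-1}(-\infty, r]$, since $g(\vx) > r$, which contradicts the assumed equality of these sets. The only point demanding care is the use of the non-strict inequality $\le$ in the definition of the sub-level sets, as it is precisely this that makes $r = f(\vx)$ a valid separating threshold. Since the statement is a routine fact from real analysis, I do not anticipate any serious obstacle; the argument uses nothing about continuity or the dimension $d$, only the ordering of $\R$ and the finiteness of the values of $f$ and $g$.
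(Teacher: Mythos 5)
Your proof is correct, but it takes a different route from the paper's. For the ``if'' direction the paper reconstructs each exact level set from the sub-level sets via the countable intersection
\[
    f^{-1}\{r\} = \bigcap_{n \ge 1}\bigl(f^{-1}(-\infty, r] \setminus f^{-1}(-\infty, r - 1/n]\bigr),
\]
concludes that $f^{-1}\{r\} = g^{-1}\{r\}$ for every $r$, and then evaluates at $r = g(x)$. You instead recover the value of the function directly, either through the identity $f(\vx) = \inf\{r : \vx \in f^{-1}(-\infty, r]\}$ or, even more simply, by contradiction with the single threshold $r = f(\vx)$ when $f(\vx) < g(\vx)$. Both of your arguments are valid; the contradiction version is the most elementary of the three, using only the closedness of the sub-level sets under the non-strict inequality (exactly the point you flag) and no limiting process at all. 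What the paper's route buys is the slightly stronger structural fact that the entire family of level sets is determined by the family of sub-level sets, which is mildly informative but not needed for the lemma itself; your argument is shorter and isolates the essential mechanism more cleanly.
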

\begin{proof}
The ``only if'' part is trivial. As for the ``if'' part, note that we have
\begin{align*}
    f^{-1}\{r\} = \bigcap_{n \ge 1}f^{-1}(r - 1/n, r] = \bigcap_{n \ge 1}(f^{-1}(-\infty, r] \setminus f^{-1}(-\infty, r - 1/n]).
\end{align*}
The same goes for $g$, and so, by our hypothesis,
\[
    f^{-1}\{r\} = g^{-1}\{r\} \quad \text{for all } r \in \R.
\]
Therefore, for any $x \in \R^d$, we have $x \in g^{-1}\{g(x)\} = f^{-1}\{g(x)\}$, or, in other words, $f(x) = g(x)$.
\end{proof}


\begin{theorem}
If we denote $D_{m_1}E_{m_2}$ as a layer with $m_1$ dilation neurons and $m_2$ erosion neurons and $L$ as a linear combination layer, the following may be said about their different configurations. 
\begin{enumerate}[(i)]
    \item The architecture $D_{m_1}E_{0}\rightarrow D_{m_2}E_{0} \rightarrow \cdots \rightarrow D_{m_\ell} E_{0}$ consisting only of dilation layers is equivalent to the architecture $D_{m_\ell}E_{0}$ with a single dilation layer. A similar statement is true if one considers architectures with only purely erosion layers.
    \item The architecture $D_1E_1 \rightarrow D_1$ is not equivalent to $D_1E_0$. Similarly, it is not equivalent to $D_0E_1$, and, consequently, the architectures $D_1E_1 \rightarrow D_1E_1$ and $D_1E_1$ are inequivalent.
    \item The architecture $D_1 E_1 \rightarrow D_1 \rightarrow L$ is not equivalent to $D_1E_0 \rightarrow L$.
    \item The architecture $D_{2}E_{0}\rightarrow D_{0}E_{2} \rightarrow D_1$ is not equivalent to $D_{2}E_{0} \rightarrow D_1$.
\end{enumerate}
\end{theorem}

\begin{proof}
(i) Let $x \in R^d $ be the input to the network. Let there be two networks $N_{1}$ and $N_2$. Let there be $m_1$ and $m_2$ dilated neurons in, respectively, the first and the second layers of Network $N_1$. Let the parameters of the network $N_1$ in the first layer and 2nd layer are $w^1 \in R^{d\times m_1}$  and $w^2 \in R^{l_{1}\times l_{2}}$  respectively. Whereas let  there is only  a single layer with  $m_1$ number of  dilated neurons in network $N_2$ and the parameters are denoted as $u \in R^{d\times m_2}$.  Let  $f(x) \in R^{m_2} $  and $g(x) \in R^{m_2}$ are the output from the last layer of network $N_1$ and $N_2$ respectively.

For Network $N_1$
\begin{align}
  \label{eqs0}
  &y_{j} =\max_i(x_i+w_{i,j}^1) \quad \forall j \in\{1,2,..m_1\} \\
  \label{eqs1} 
  &f_{k}(x) =\max_j(y_{j}+w_{j,k}^2)  \quad \forall j,k
\end{align}

For network $N_2$
\begin{align}
  g_{k}(x) =\max_j(x_{j}+u_{j,k}^2) \qquad \forall k,j   
\end{align}

Let
\begin{align}
S_{f}^k=\{x \mid f_{k}(x) \leq e_k; e_k \in R \} \\
S_{g}^k=\{x \mid f_{k}(x) \leq e_k; e_k \in R \}
\end{align}

For Network $N_1$
\begin{align}
\label{eqs2}
f_{k}(x) \leq e_{k} ;  \qquad  \forall k   \\
\label{eqs3}
y_i + w_{i,j}^2 \leq e_{k}   \qquad \forall k,j\\
\label{eqs4}
y_i \leq e_{k} - w_{i,j}^2  \qquad \forall k,j
\end{align}

From equation~\ref{eqs0} and equation~\ref{eqs4} we get 
\begin{align}
\label{eqs5}
\max_i(x_i+w_{i,j}^1)   \leq e_{k} - w_{i,j}^2  \qquad  \forall k,j \\
\label{eqs6}
x_i+w_{i,j}^1   \leq e_{k} - w_{i,j}^2 \qquad  \forall k,j,i 
\end{align}

\begin{align}
\label{eqs7}
x_i  \leq e_{k} - w_{i,j}^2 -w_{i,j}^1  \qquad \forall k,j,i 
\end{align}

Which means 
\begin{align}
\label{eqs8}
x_i  \leq \min_{j}(e_{k} - w_{i,j}^2 -w_{i,j}^1) \qquad \forall k,i \\
x_i  \leq e_{k}  - \max_{j}(w_{i,j}^2 +w_{i,j}^1) \qquad \forall k,i 
\end{align}

For network $N_2$ 
\begin{align}
g_{k}(x) =\max_j(x_{j}+u_{j,k}^2)           \\
x_i  \leq (e_{k} -u_{i,k}) \forall k,i 
\end{align}

To hold  the  set $S_{g}^k$  is equal to $S_{f}^k$ to   $\forall k$ 

\begin{align}
u_{i,k}=\max_{j}(w_{i,j}^2 +w_{i,j}^1) \forall i,k
\label{eqlongpath}
\end{align}
Hence, from Lemma \ref{lemma_feqg}, given a parameter  $w^1$ and $w^2$ of and 2 layer network $N_1$, there exist a equivalent single-layer network $N_2$ with dilated neurons $u$ which can represent the same  function.From the equation \ref{eqlongpath} we can see the parameters of the single-layer network can be constructed considering the longest path from input to output. Recursively we can say it holds for multiple layers. A similar argument can be given in the case of erosion layers.
Hence $D_{m_1}E_{0}\rightarrow D_{m_2}E_{0} \rightarrow \cdots \rightarrow D_{m_\ell} E_{0}$ proved

\vskip10pt
\noindent
(ii)
\begin{figure}
    \centering
    \includegraphics[width=0.7\textwidth]{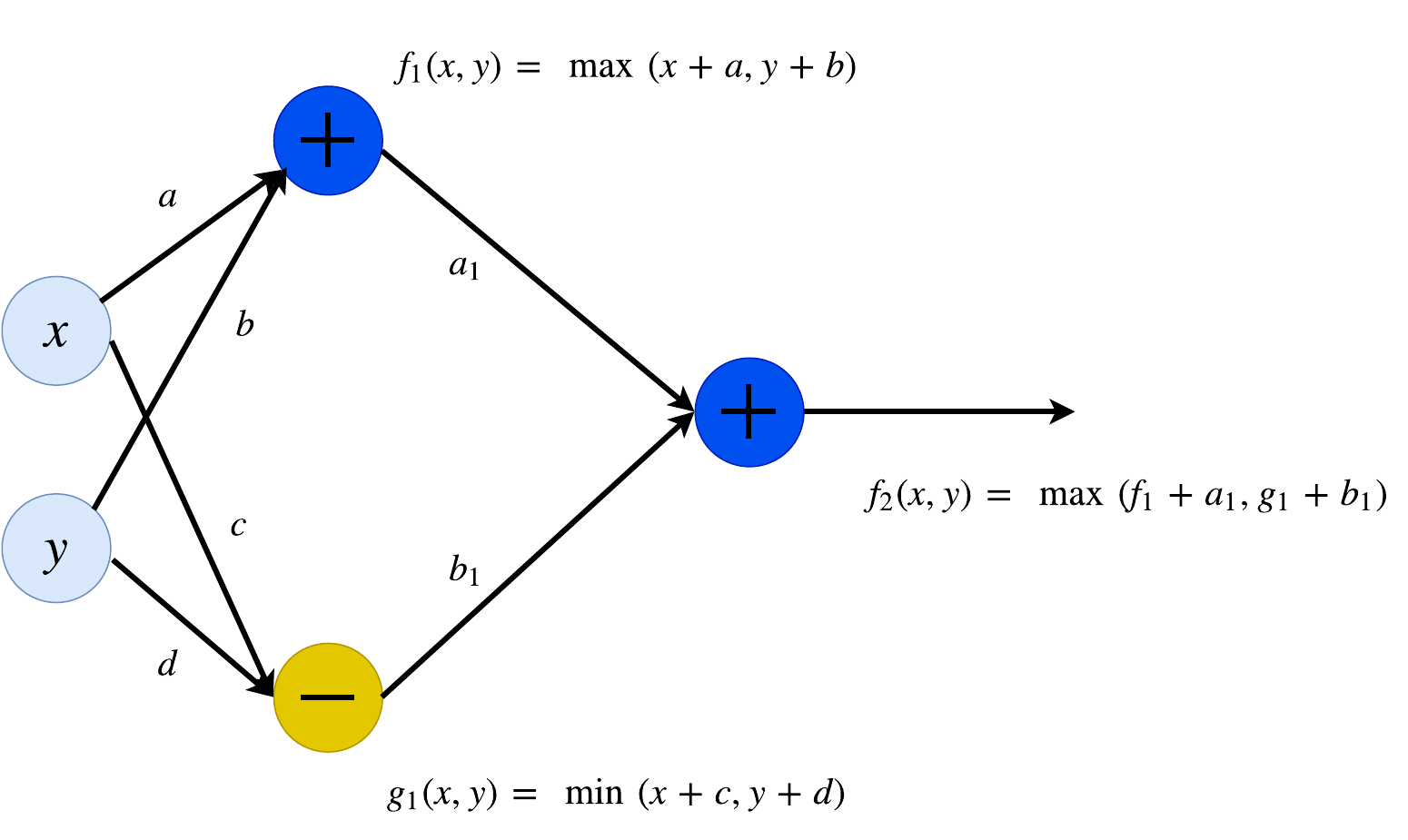}
    \caption{A network of architecture $D_1E_1 \rightarrow D_1$}
    \label{fig:d1e1_d1}
\end{figure}
For simplicity, we will assume $2$-dimensional input. Suppose that the outputs from the first layer are $f_1(x, y)$ and $g_1(x, y)$ where $f_1$ is the output of a dilation neurone and $g_1$ is the output of an erosion neurone (see Figure~\ref{fig:d1e1_d1}). We write
\[
    f_1(x, y) = \max\{x + a, y + b\}, \quad g_1(x, y ) = \min\{ x + c, y + d\}.
\]
After the second layer consisting of a single dilation neurone, we get the output
\[
    f_2(x, y) = \max\{ f_1 + a_1, g_1 + b_1\}.
\]
Note that
\begin{align*}
    f_2(x, y) \le e &\iff f_1 + a_1 \le e \text{ and } g_1 + b_1 \le e \\
    &\iff f_1 \le e - a_1 \text{ and } g_1 \le e - b_1 \\
    &\iff (x + a \le e - a_1 \text{ and } y + b \le e - a_1) \text{ and }  (x + c \le e - b_1 \text{ or } y + d \le e - b_1) \\
    &\iff (x, y) \in (-\infty, \gamma_1] \times (-\infty, \gamma_2] \cap ((-\infty, \gamma_3] \times \R \cup  \R \times (-\infty, \gamma_4]) \\
    &\iff (x, y) \in (-\infty, \gamma_1 \wedge \gamma_3] \times (-\infty, \gamma_2] \cup (-\infty, \gamma_1] \times (-\infty, \gamma_2 \wedge \gamma_4].
\end{align*}
Note that $\gamma_1 \le \gamma_3 \iff a_1 + a \ge b_1 + c$ and $\gamma_2 \le \gamma_4 \iff a_1 + b \ge b_1 + d$.
Therefore, if $a_1 + a \ge b_1 + c$ and $a_1 + b \ge b_1 + d$, then
\[
    f_2^{-1}(-\infty, e] = (-\infty, \gamma_1] \times (-\infty, \gamma_2].
\]
Thus in this case $f_2$ can be realized in the architecture $D_1 E_0$.

If, however, $a_1 + a < b_1 + c$ and $a_1 + b < b_1 + d$, then
\[
    f_2^{-1}(-\infty, e] = (-\infty, \gamma_3] \times (-\infty, \gamma_2] \cup (-\infty, \gamma_1] \times (-\infty, \gamma_4],
\]
which is not realizable as the sublevel set of a function of $D_1E_0$ architecture.
\vskip10pt
\noindent
(iii) 
The proof is a simple modification of the proof of (ii). For $\alpha > 0$,
\begin{align*}
    \alpha f_2(x, y) \le e &\iff f_1 + a_1 \le \frac{e}{\alpha} \text{ and } g_1 + b_1 \le \frac{e}{\alpha} \\
    &\iff f_1 \le \frac{e}{\alpha} - a_1 \text{ and } g_1 \le \frac{e}{\alpha} - b_1 \\
    &\iff (x + a \le \frac{e}{\alpha} - a_1 \text{ and } y + b \le \frac{e}{\alpha} - a_1) \\
    & \text{ and } (x + c \le \frac{e}{\alpha} - b_1 \text{ or } y + d \le \frac{e}{\alpha} - b_1) \\
    &\iff (x, y) \in (-\infty, \gamma_1] \times (-\infty, \gamma_2] \cap ((-\infty, \gamma_3] \times \R \cup  \R \times (-\infty, \gamma_4]) \\
    &\iff (x, y) \in (-\infty, \gamma_1 \wedge \gamma_3] \times (-\infty, \gamma_2] \cup (-\infty, \gamma_1] \times (-\infty, \gamma_2 \wedge \gamma_4].
\end{align*}
Note that $\gamma_1 \le \gamma_3 \iff a_1 + a \ge b_1 + c$ and $\gamma_2 \le \gamma_4 \iff a_1 + b \ge b_1 + d$.

Therefore, if $a_1 + a \ge b_1 + c$ and $a_1 + b \ge b_1 + d$, then
\[
    (\alpha f_2)^{-1}(-\infty, e] = (-\infty, \gamma_1] \times (-\infty, \gamma_2].
\]
Thus in this case $f_2$ can be realized in the architecture $D_1 E_0 \rightarrow L$.

If, however, $a_1 + a < b_1 + c$ and $a_1 + b < b_1 + d$, then
\[
    (\alpha f_2)^{-1}(-\infty, e] = (-\infty, \gamma_3] \times (-\infty, \gamma_2] \cup (-\infty, \gamma_1] \times (-\infty, \gamma_4],
\]
which is not realizable as the sublevel set of a function of $D_1E_0 \rightarrow L$ architecture.

Sub-level sets of the $D_1E_0 \rightarrow L$ architecture. For $\beta > 0$
\begin{align}
    \beta \max\{ x + u, y + v \} \le e & \iff x \le \frac{e}{\beta} - u \text{ and } y \le \frac{e}{\beta} - v.
\end{align}

Equating $\frac{e}{\beta} - u = \frac{e}{\alpha} - a_1 - a$, $\frac{e}{\beta} - v = \frac{e}{\alpha} - a_1 - b$, we can see that one can take $\beta  = \alpha, u = a + a_1, v = b + a_1$ to realize the function $\alpha f_2$ in the $D_1E_0 \rightarrow L$ architecture.

(iv) It can be proved in the same way as (ii)
\end{proof}

\section{Proof of Proposition 1: Morphological block as a sum of hinge functions}
\label{sec:proof_lemma1}
\begin{proposition}
\label{th:gx_sum_hinge}
The function computed by a single morphological block with $n$ dilation and $m$ erosion neurons followed by a linear combination layer computes $\mathcal{M}(\vx)$, which is a sum of multi-order hinge functions.
\end{proposition}
\begin{proof}
As defined in the main paper the computed $\mathcal{M}(\vx)$ has the following form.
\begin{equation}
    \mathcal{M}(\vx)=\sum_{i=1}^{n} \omega_i^{+} z_{i}^{+} + \sum_{j=1}^{m}  \omega_j^{-} z_{j}^- ,
\end{equation}
where $z_{i}^+$ and $z_{j}^-$ are the output of $i^{th}$ dilation neuron and $j^{th}$ erosion neuron, respectively and $\omega_i^{+}$ and $\omega_j^{-}$ are the weights of the the linear combination layer. Replacing the $z_i^+$ and $z_j^-$ with their expression, the equation becomes the following.
\begin{equation}
    \mathcal{M}(\vx)=\sum_{i=1}^{n}\omega_i^{+} \max_{k}\{x'_k+s_{ik}^+\}  + \sum_{i=1}^{m} -\omega_i^{-} \max_{k}\{s_{ik}^- -x'_k\},
\end{equation}
where $s_{ik}^+$ and $s_{ik}^-$ denote the $k^{th}$ component of the $i^{th}$ structuring element of dilation and erosion neurons, respectively. The above equation can be further expressed in the following form,
\begin{equation}
    \mathcal{M}(\vx)=\sum_{i=1}^{n} \alpha_{i}^+\max_{k}\{\theta{i}^+x'_{k} +\rho_{ik}^+\} \\+ \sum_{i=1}^{m} \alpha_{i}^-\max_{k}\{\theta_{i}^-x'_{k} +\rho_{ik}^-\},
    \label{eq:g_phi}
\end{equation}
Where $\theta_i^+$, $\theta_i^-$, $\rho_{ik}^+$ and $\rho_{ik}^-$ are defined in the following way
\begin{align*}
  \theta_i^+=
  \begin{cases} 
  \omega_{i}^+ & \text{if } \omega_{i}^+\ge0  \\
  -\omega_{i}^+ & \text{if } \omega_{i}^+<0
  \end{cases}
  &&
  \theta_i^-=
  \begin{cases} 
  -\omega_{i}^- & \text{if } \omega_{i}^-\ge0  \\
  \omega_{i}^- & \text{if } \omega_{i}^-<0
  \end{cases}
  \\
  \rho_{ik}^+=
  \begin{cases} 
  s_{ik}^+\omega_{i}^+ & \text{if } \omega_{i}^+\ge0  \\
  -s_{ik}^+\omega_{i}^+ & \text{if } \omega_{i}^+<0
  \end{cases}
    &&
  \rho_{ik}^-=
  \begin{cases} 
  s_{ik}^-\omega_{i}^- & \text{if } \omega_{i}^-\ge0  \\
  -s_{ik}^-\omega_{i}^- & \text{if } \omega_{i}^-<0
  \end{cases}
  \\
  \alpha_{i}^+=
  \begin{cases} 
  1 & \text{if } \omega_{i}^+\ge0  \\
  -1 & \text{if } \omega_{i}^+<0
  \end{cases}
  &&
  \alpha_{i}^-=
  \begin{cases} 
  -1 & \text{if } \omega_{i}^-\ge0  \\
  1 & \text{if } \omega_{i}^-<0
  \end{cases}
\end{align*}

Now, without any loss of generality, we can write \eqref{eq:g_phi} as follows
\begin{align}
    \mathcal{M}(\vx)  & = \sum_{i=1}^{m+n} \alpha_{i}\max_{k}(\theta_{i}x'_{k} +\rho_{ik})
    \label{eq:finaleq}
\end{align}
where 
\begin{align*}
 \theta_{i}=
  \begin{cases} 
  \theta{i}^+ & \text{if $i\leq n$ }   \\
  \theta_{i-n}^-  & \text{if $ n < i \leq m+n$ } 
  \end{cases}
  \\
 \rho_{ik}=
  \begin{cases} 
  \rho_{ik}^+ & \text{if $i\leq n$ }   \\
  \rho_{(i-n)k}^-  & \text{if $ n < i \leq m+n$ } 
  \end{cases}
  \\
 \alpha_{i}=
  \begin{cases}
  \alpha_{i}^+ & \text{if $i\leq n$ }   \\
  \alpha_{(i-n)}^-  & \text{if $ n < i \leq m+n$ }
  \end{cases}
\end{align*}

Finally, we can rewrite equation~\ref{eq:finaleq} as 
\begin{equation}
    \mathcal{M}(\vx) = \sum_{i=1}^{l} \alpha_{i} \phi_{i}(\vx),
     \label{eq:final2}
\end{equation}
where $l=m+n$, $\alpha_{i} \in \{1,-1\}$  and $\phi_i(\vx)$'s are of the following form
\begin{align}
    & \phi_{i}(\vx)= \max_{k}(\vv_{ik}^{T}\vx'+\rho_{ik}), 
    \label{eqstd1}
\end{align} 
with
\begin{align}
 & v_{ikt} = 
  \begin{cases} 
  \beta_{i} & \text{if } t=k  \\
  0  & \text{if } t\neq k
  \end{cases}
\end{align}
In \eqref{eqstd1}, $\vv_{ik}^{T}\vx'+\rho_{ik}$ is affine and $\alpha_{i} \phi_{i}(\vx)$ is a $d$-order hinge function. Hence $\sum_{i=1}^{l} \alpha_{i} \phi_{i}(\vx)$  i.e., $\mathcal{M}(\vx)$ represents  sum of multi-oder hinge function. 
\end{proof}

\section{Number of Hyperplanes}
Since a morphological block computes a sum of hinge functions, it can potentially learn a large number of hyperplanes. 
The function $\mathcal{M}(\vx)$ learned by a single-layer Morphological network may also be expressed in the following form:
\begin{equation}
    \mathcal{M}(\vx) = \sum_{i=1}^{l}\alpha_{i}\max_{k}\{\theta_{k}x_{k}+\rho_{ik}\},
    \label{eq:num_d_plane}
\end{equation}
where $\alpha_i$, $\theta_{k}$, $\rho_{ik} \in \R$. We see that $\mathcal{M}(\vx)$ is a sum of $l$ functions, each of which computes $\max$ over the linearly transformed elements of $\vx$. Since the $\max$ is computed over the (transformed) elements of $\vx$, each $\max$ operation selects only one element of $\vx$. So, the computed $\mathcal{M}(\vx)$ may not contain all the elements of $\vx$ and the index ($k$) of the selected element varies depending on the input and the structuring element. However, if $l > d$, $\mathcal{M}(\vx)$ may contain all the elements of $\vx$. So \eqref{eq:num_d_plane} can be rewritten as 
\begin{equation}
    \mathcal{M}(\vx) = \alpha_1(\theta_1 x_{k_1} + \rho_{1k_1}) + \alpha_2(\theta_2 x_{k_2} + \rho_{2k_2}) + \cdots 
    + \alpha_l(\theta_l x_{k_l} + \rho_{lk_l}).
\end{equation}
where $x_{k_i}$ represents any one of the $d+1$ elements of $\vx$ selected by $i$-th neuron by $\max$ operation depending on structuring element $\vs_i$. So each $x_{k_i}$ is chosen from $d+1$ elements. Therefore, depending on which element of $\vx$ gets selected by each neuron, $\mathcal{M}(\vx)$ forms one of the $(d+1)^l-1$ hyperplanes. The $-1$ occurs in the number of hyperplanes because on one occasion only limiter or bias is selected. Note that some of these hyperplanes must be parallel to some axes. 
For $\mathcal{M}(\vx)$ to form a hyperplane that is not parallel to any of the axes, all elements of $\vx$ must get selected by some $\max$ functions or other. This occurs in $d! \times \binom{l}{d}$ ways. The remaining $l-d$ number of elements $x_{k_i}$'s are repeat selection by some functions. So, there can be almost $d! \times \binom{l}{d} \times {(d+1)^{l-d}}$ hinging hyperplanes that are not parallel to any of the axes. 

\section{One morphological block and function approximation}
A single morphological block represents a sum of hinge functions. However, it is not clear if all hinge functions can be represented by a single morphological block. 
In a numerical study, we have tried to approximate the hinge function $\max(x+y, 0)$ using a single morphological block by varying the number of dilation/erosion neurons. We have generated values of the function in the square $[-5, 5] \times [-5, 5]$, and trained the network with mean squared error (MSE) loss. In  Figure~\ref{fig:morph-block}, we have plotted the MSE loss (after convergence) against the number of morphological neurons used. It is seen that a single morphological block is unable to reduce the error unless we use additional bias or limiter terms in the morphological neurons. However, we do not know theoretically if having additional bias terms in morphological operations help in universal approximation.

\begin{figure}[htb]
    \centering
    \includegraphics[width=0.71\linewidth]{./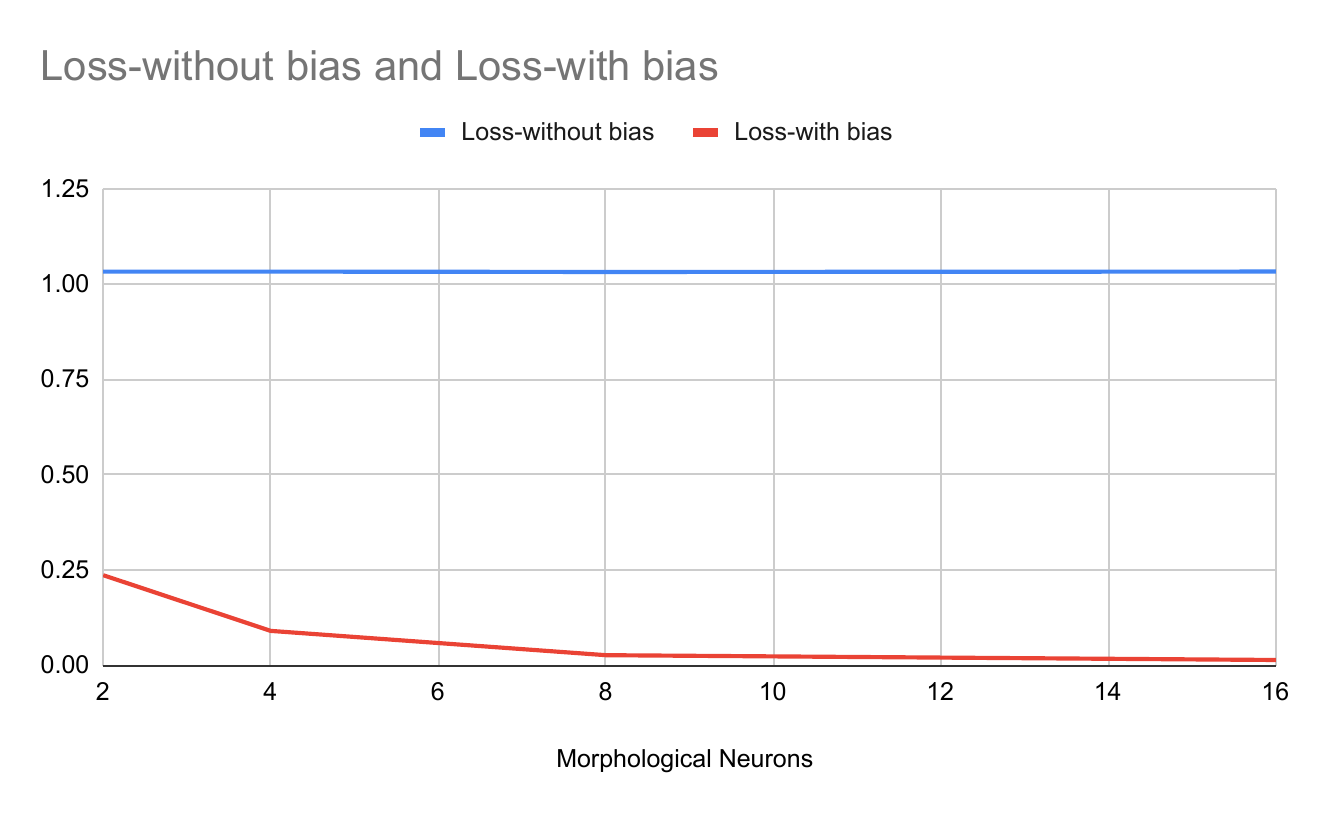}
    \caption{Graph of approximation loss  with varying  morphological neurons in a single morphological block.}
    \label{fig:morph-block}
\end{figure}

\section{Universal Approximation by two Morphological blocks}
Here we have shown that two sequential Morphological blocks can approximate any continuous functions. First, we have shown that any hyperplane can be represented by a single morphological block. After that, we have shown the universal approximation using two morphological blocks.

\begin{lemma}\label{lem:hyp_rep_main}
Let $K$ be a compact subset of $\R^d$. Then, over $K$, any hyperplane $w^\top\vx + b$ can be represented as an affine combination of $d$ dilation neurons which only depend on $K$.
\end{lemma}
\begin{proof}
Since we are in a compact set, there exists $C > 0$ such that $|x_\ell| \le C$ for any $1 \le \ell \le d$. Where $x_\ell$ is each element of $\vx$. Take 
\[
    s_{\ell} =  -3C \mathbf{1}_{d} + 3C e_{\ell, d}, 1 \le \ell \le d,
\]
where $\mathbf{1}_{d}$ is the vector of all ones and $e_{\ell, d}$ is the $\ell$-th unit vector in $\R^{d}$. Then all but the $\ell$-th coordinate of $s_{\ell}$ are $-3C$, while the $\ell$-th coordinate is $0$. Then note that, for any $ \vx \in K$, and $1 \le \ell \le d$,
\begin{align*}
    x_{\ell} + s_{\ell, \ell} = x_{\ell} \ge - C > -2C &= C - 3C \\
    &\ge x_{j} - 3C = x_j + s_{\ell, j},
\end{align*}
for any $j \ne \ell$. It follows that for any $x \in K$, and $1 \le \ell \le d$,
\[
    \vx \oplus s_{\ell} = x_\ell.
\]

Now given any hyperplane $w^\top\vx +b$, we can express it exactly as a linear combination of dilation neurons over $K$:
\[
    w^\top\vx + b = \sum_{\ell = 1}^d w_{\ell} x_{\ell} + b = \sum_{\ell = 1}^d w_{\ell} (\vx \oplus s_{\ell}) + b.
\]
This completes the proof.
\end{proof}

\begin{lemma}[lemma 1 of main paper]
Any linear combination of hinge functions $\sum_{i = 1}^m \alpha_i h^{(k_i)}(\vx)$ can be represented over any compact set $K$ as a two sequential  morphological  block consisting of dilation neurons only.
\end{lemma}
\begin{proof}
Let $B = \max_{1 \le i \le m}\sup_{\vx \in K} |h^{(k_i)}(\vx)|$. We now give the architecture of the desired Morph-Net.
\begin{enumerate}
    \item The first dilation-erosion layer has exactly $d$ dilation neurons given by $\vx \oplus s_{\ell}, 1 \le \ell \le d$.
    \item The first linear combination layer has $k = \sum_{i = 1}^m (k_i + 1)$ neurons, with the $i$-th block of $(k_i + 1)$ neurons outputting the constituent hyperplanes of $h^{(k_i)}(\vx)$. This can be done by Lemma~\ref{lem:hyp_rep_main}.
    \item The second dilation-erosion layer just has $m$ dilation neurons, each outputting a hinge function. The $\ell$-th neuron is constructed as follows: Write any $\vy \in \R^k$ as $(\vy_1^\top, \ldots, \vy_m^\top)^\top$ where $\vy_j = (y_{j,1}, \ldots, y_{j, k_j + 1})^{\top}$.  We want the output of the $\ell$-th neuron to be \\ $\max_{1 \le v \le k_{\ell} + 1} y_{\ell, v}$. So we take $\vt_\ell = (\vt_{\ell, 1}^\top, \ldots, \vt_{\ell, m}^\top)^\top$, where $\vt_{\ell, j} = -3B \mathbf{1}_{k_j + 1}$ for $j \ne \ell$, and $\vt_{\ell, \ell} = \mathbf{0}_{k_\ell + 1}$. Then, for any $j \ne \ell$, $1 \le u \le k_{j} + 1$, and $1 \le v \le k_{\ell} + 1$, we have
    \begin{align*}
        y_{j, u} + t_{\ell, j, u} = y_{j, u} - 3B &\le B - 3B \\
        &= -2B \\
        &< -B \\
        &\le y_{\ell, v} = y_{\ell, v} + t_{\ell, \ell, v}. 
    \end{align*}
    It follows that $\vy \oplus \vt_\ell = \max_{1 \le v \le k_{\ell} + 1} y_{\ell, v}$. With this construction, the outputs of the second dilation-erosion layer are the $m$ numbers $h^{(k_i)}(\vx)$.
    \item The second linear combination layer just has a single neuron that combines the outputs of the previous layer in the desired way:
    \[
        \vz \mapsto \sum_{i = 1}^m \alpha_i z_i.
    \]
\end{enumerate}
This completes the proof.
\end{proof}

\section{Results}
\subsection{HIGGS Dataset}
Here also show some results on the Higgs dataset. Here the performance of the morphological block is not so good, but this could provide some hint towards improving the performance. 
\begin{figure}[!h]
    \centering
    \subfigure[input]{\includegraphics[width=0.48\linewidth]{./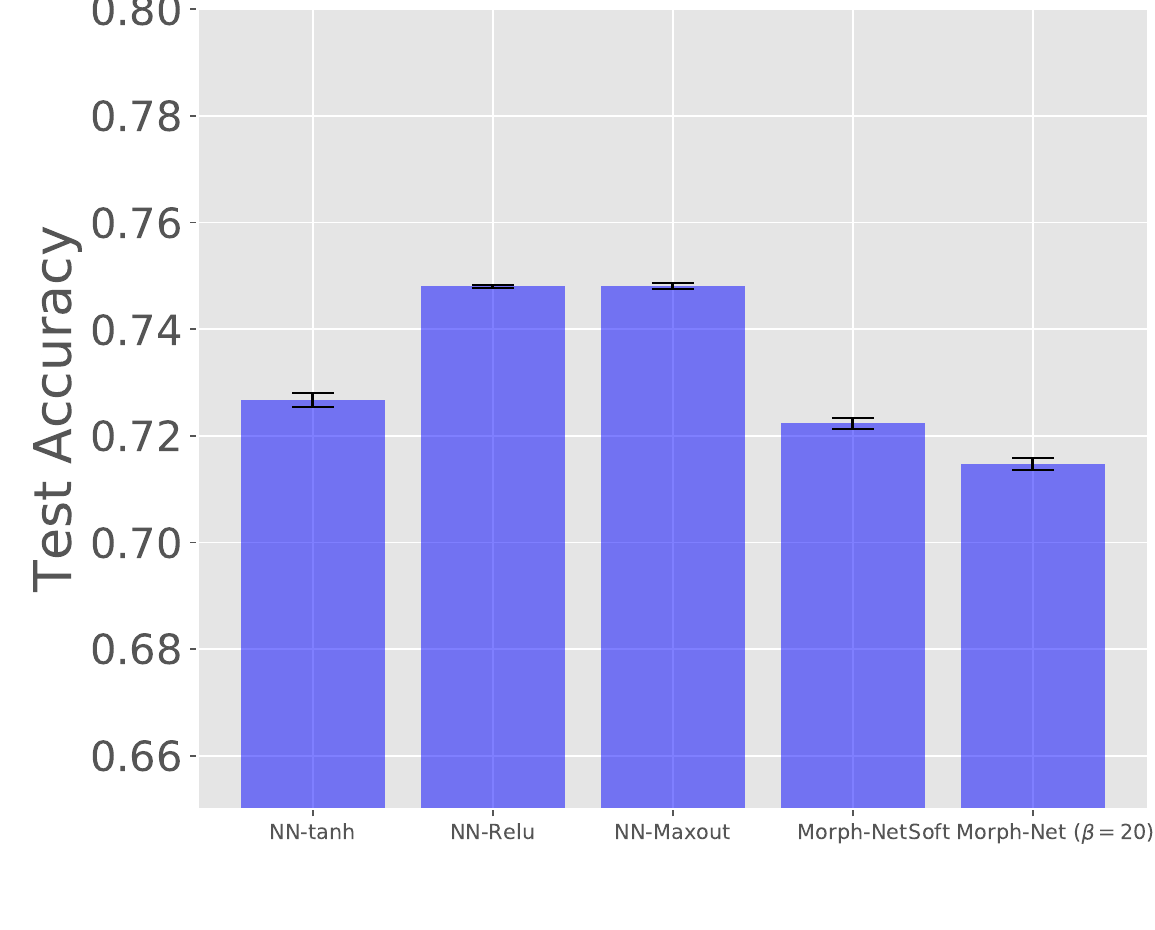}}
    \subfigure[Morph-Net]{\includegraphics[width=0.48\linewidth]{./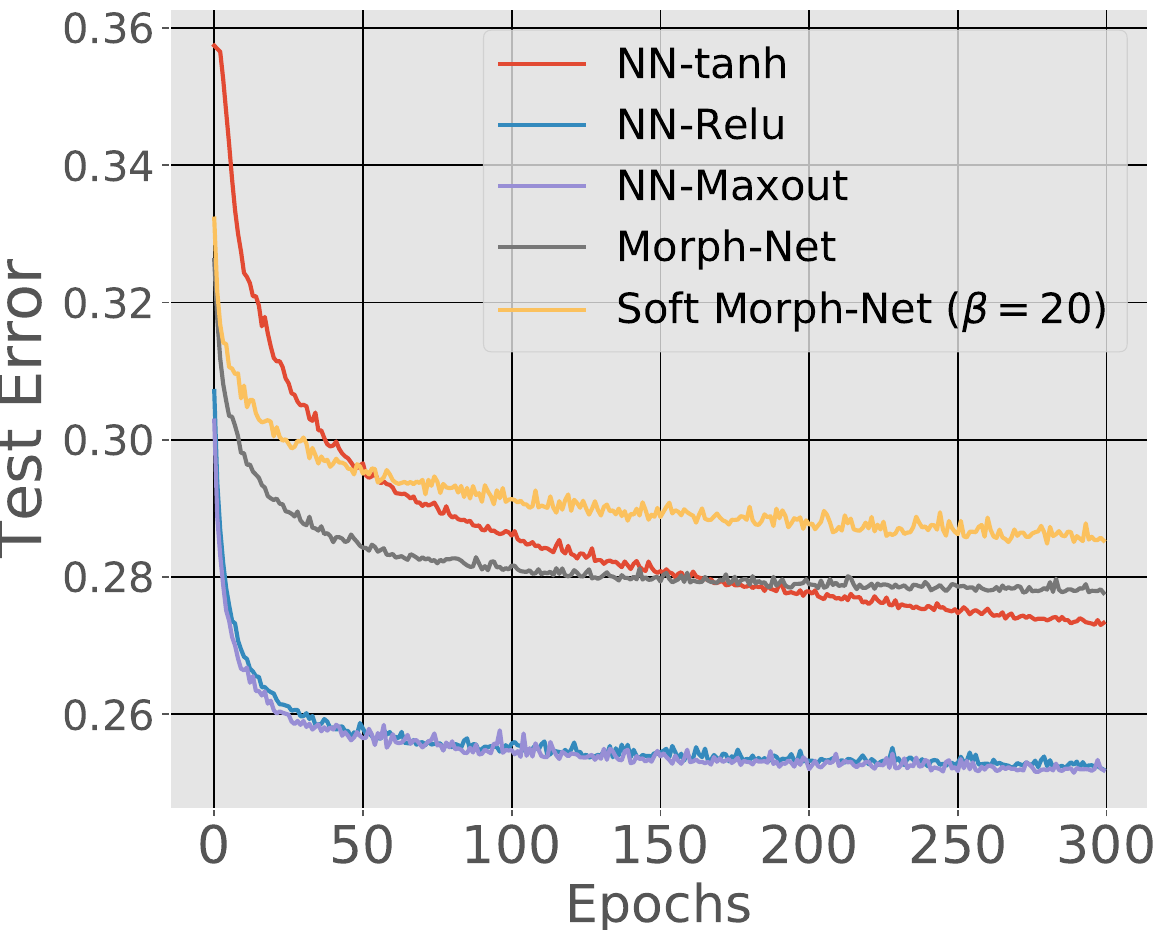}}
\caption{Results on Higgs dataset}
\end{figure}

\subsection{CIFAR10 and SVHN}
A few results of CIFAR10 and SVHN  dataset by varying number of neurons is given in figure~\ref{cifar10ex} and figure~\ref{svhnex}. 

\begin{figure}[htb]
    \centering
    \begin{tabular}{cc}
         \includegraphics[width=0.47\linewidth]{./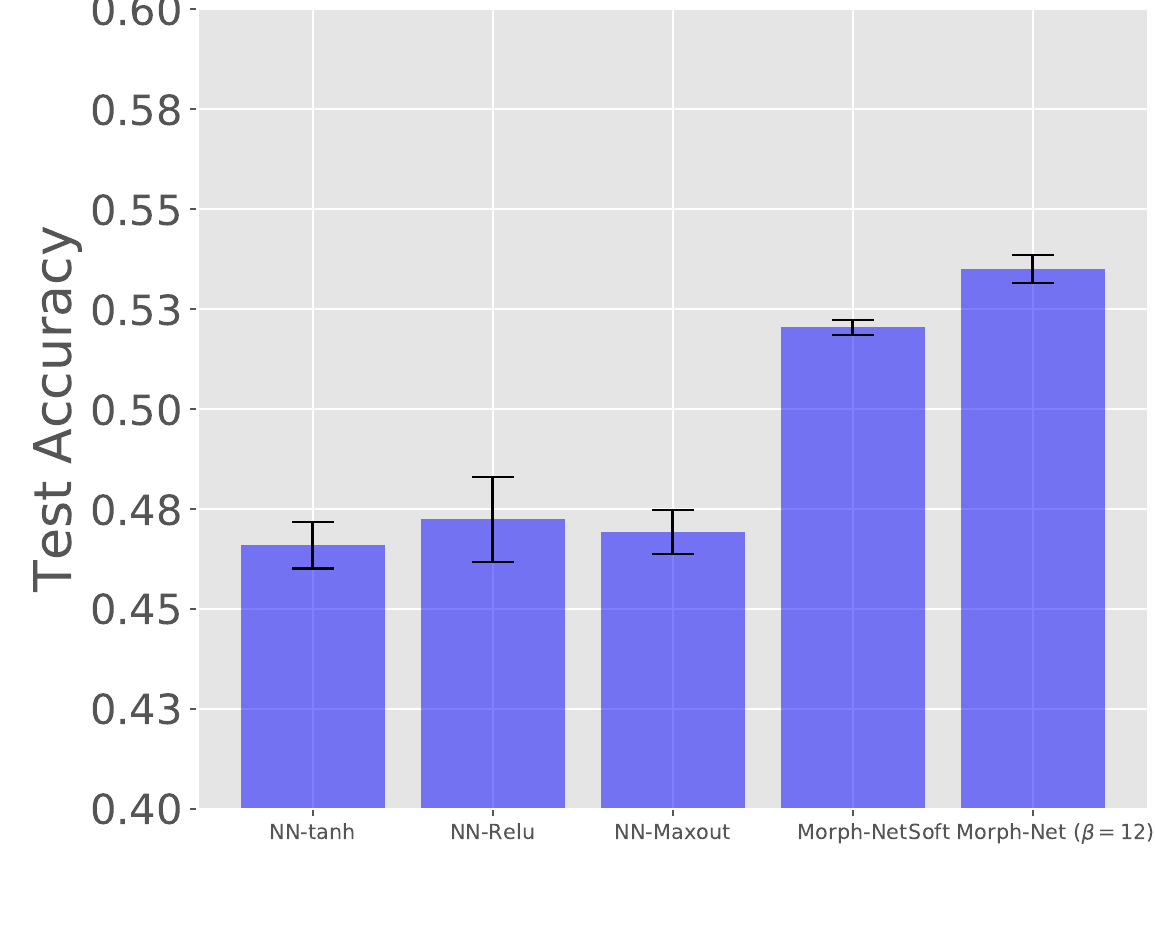} & \includegraphics[width=0.46\linewidth]{./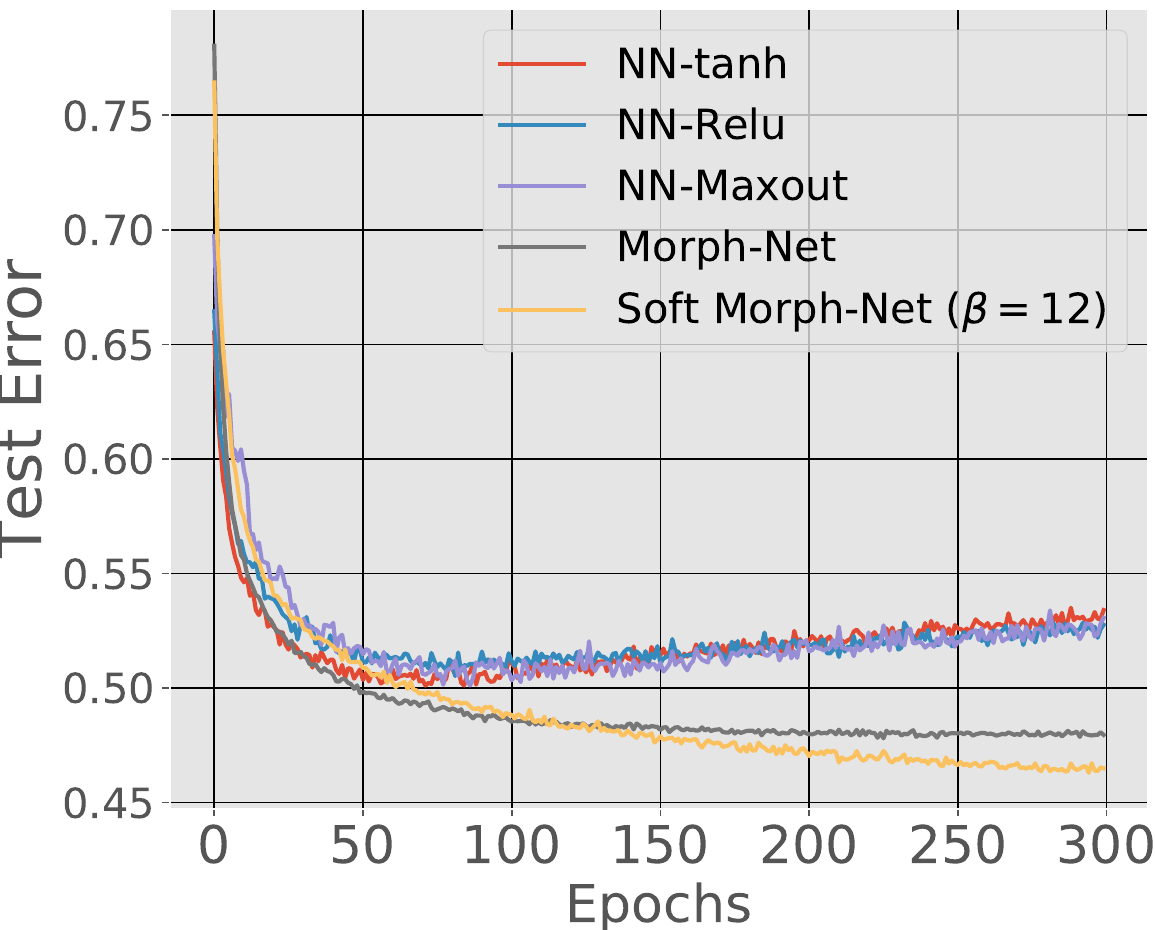} \\
         \includegraphics[width=0.47\linewidth]{./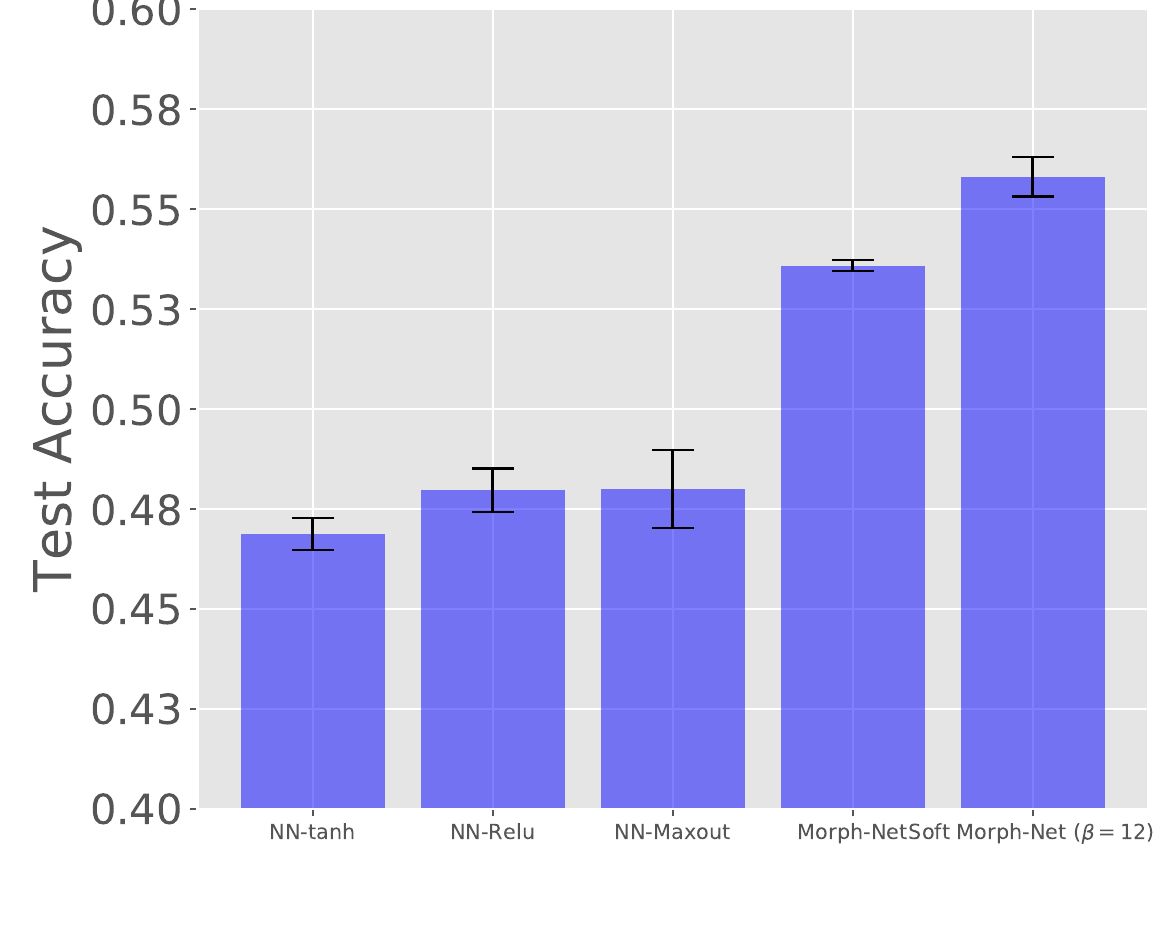}& \includegraphics[width=0.46\linewidth]{./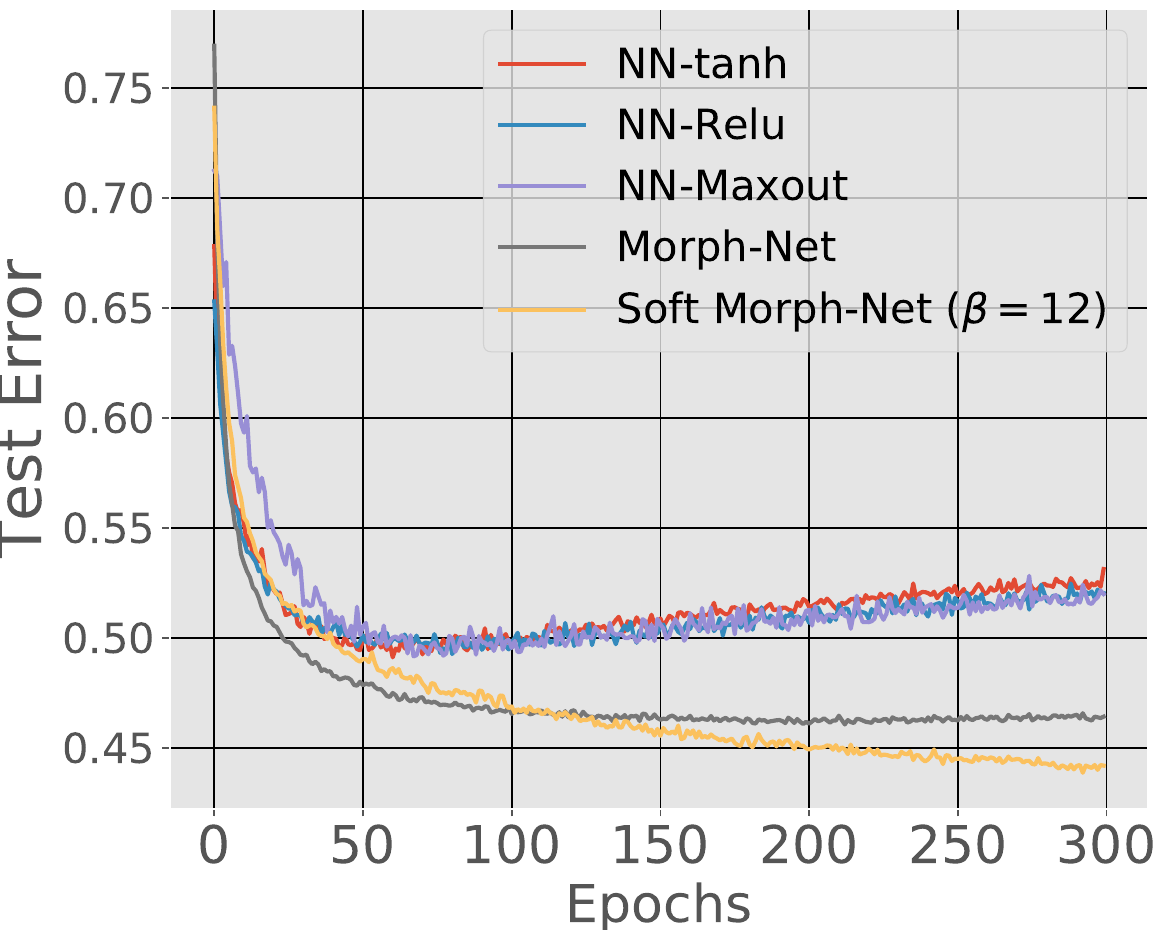} \\
         \includegraphics[width=0.47\linewidth]{./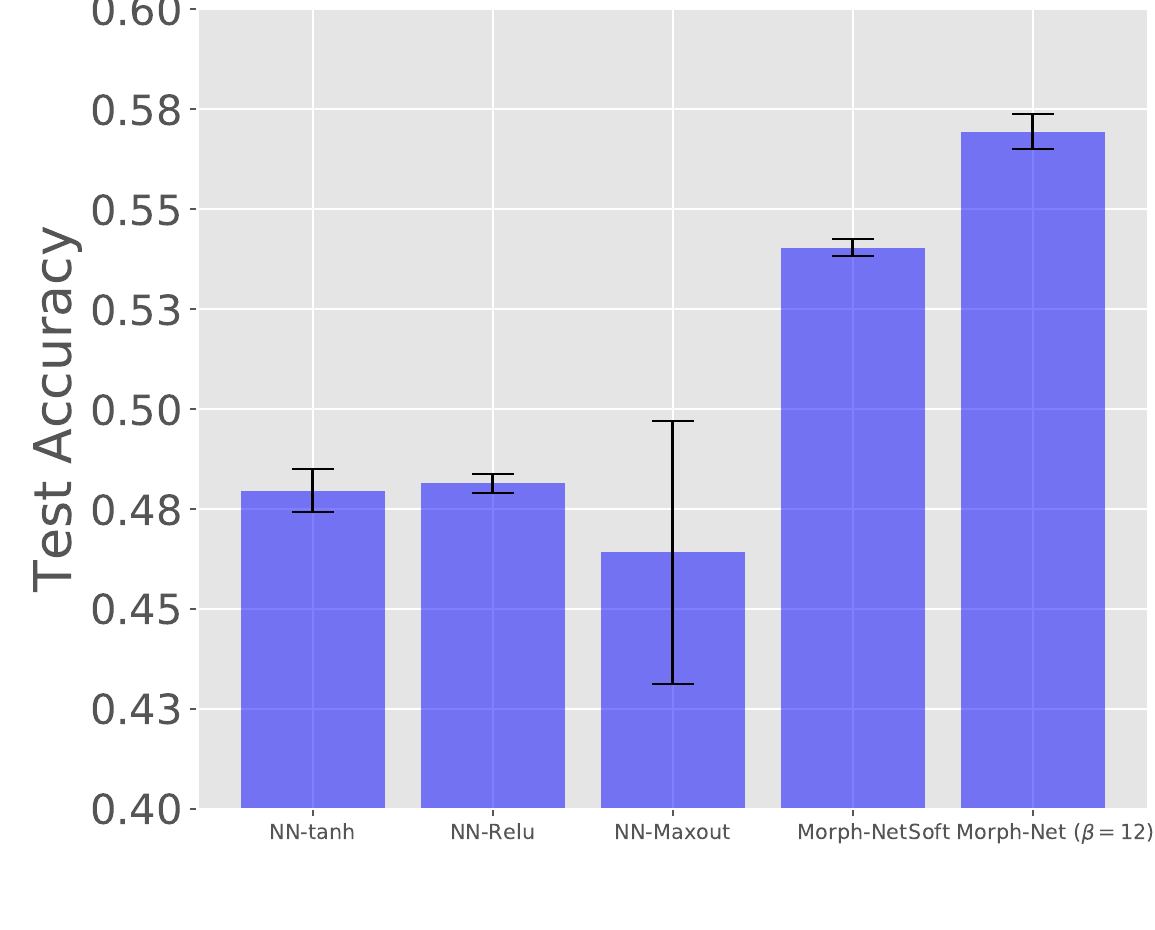} & \includegraphics[width=0.46\linewidth]{./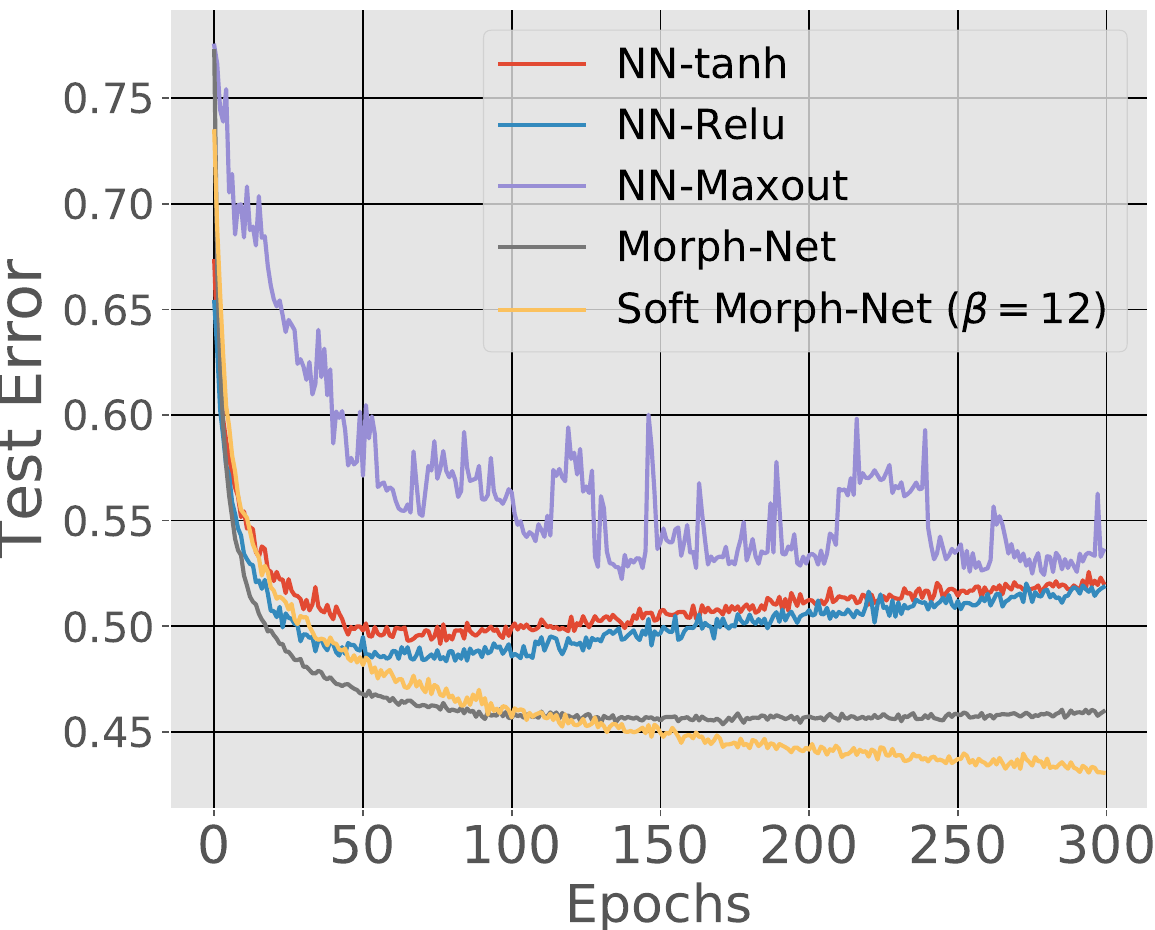} \\
         (a) Bar graph & (b) Test error vs epochs
    \end{tabular}
\caption{Results on CIFAR10 dataset, varying number of neurons in the hidden layer: l=200 (1st row), l=400 (2nd row), l=600 (3rd row) }
\label{cifar10ex}
\end{figure}

\begin{figure}[!h]
    \centering
    \begin{tabular}{cc}
         \includegraphics[width=0.47\linewidth]{./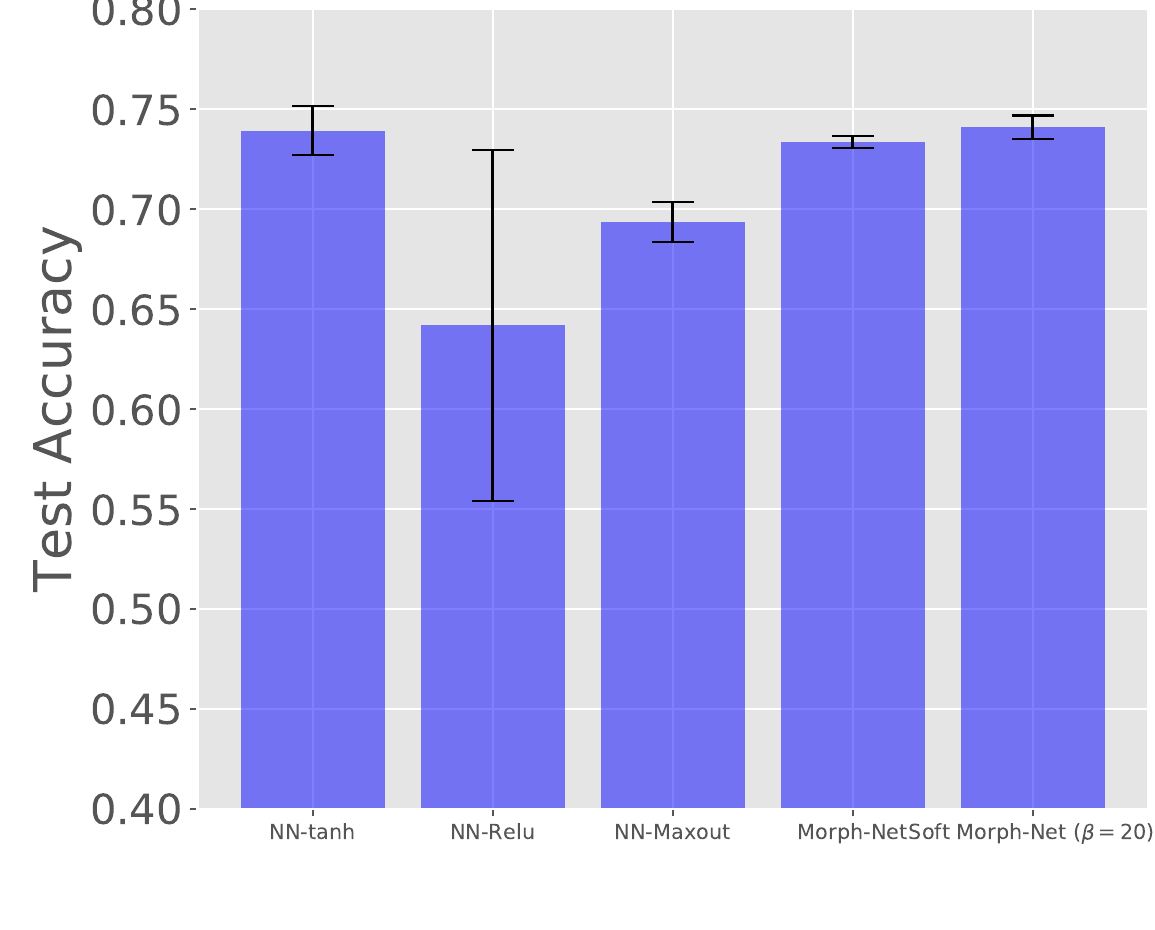} & \includegraphics[width=0.46\linewidth]{./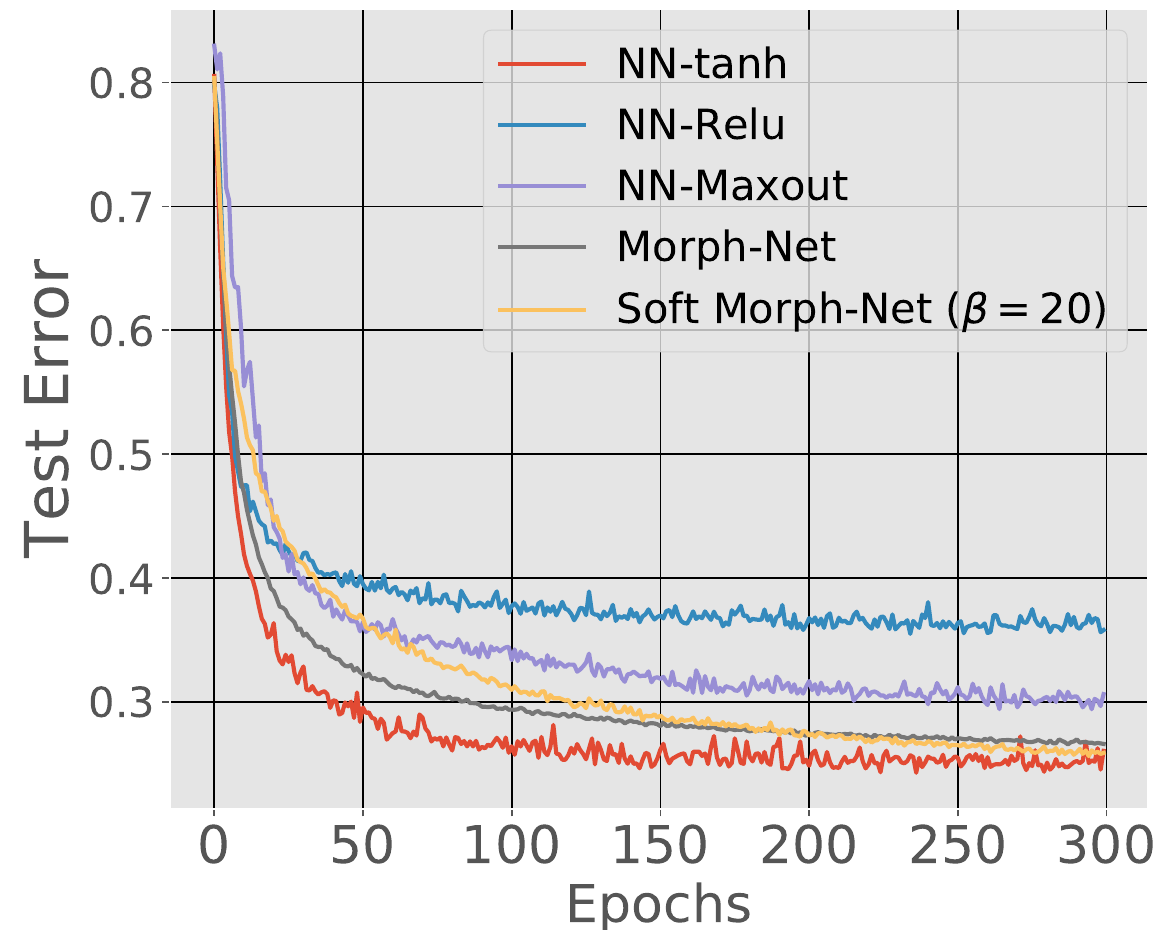} \\
         \includegraphics[width=0.47\linewidth]{./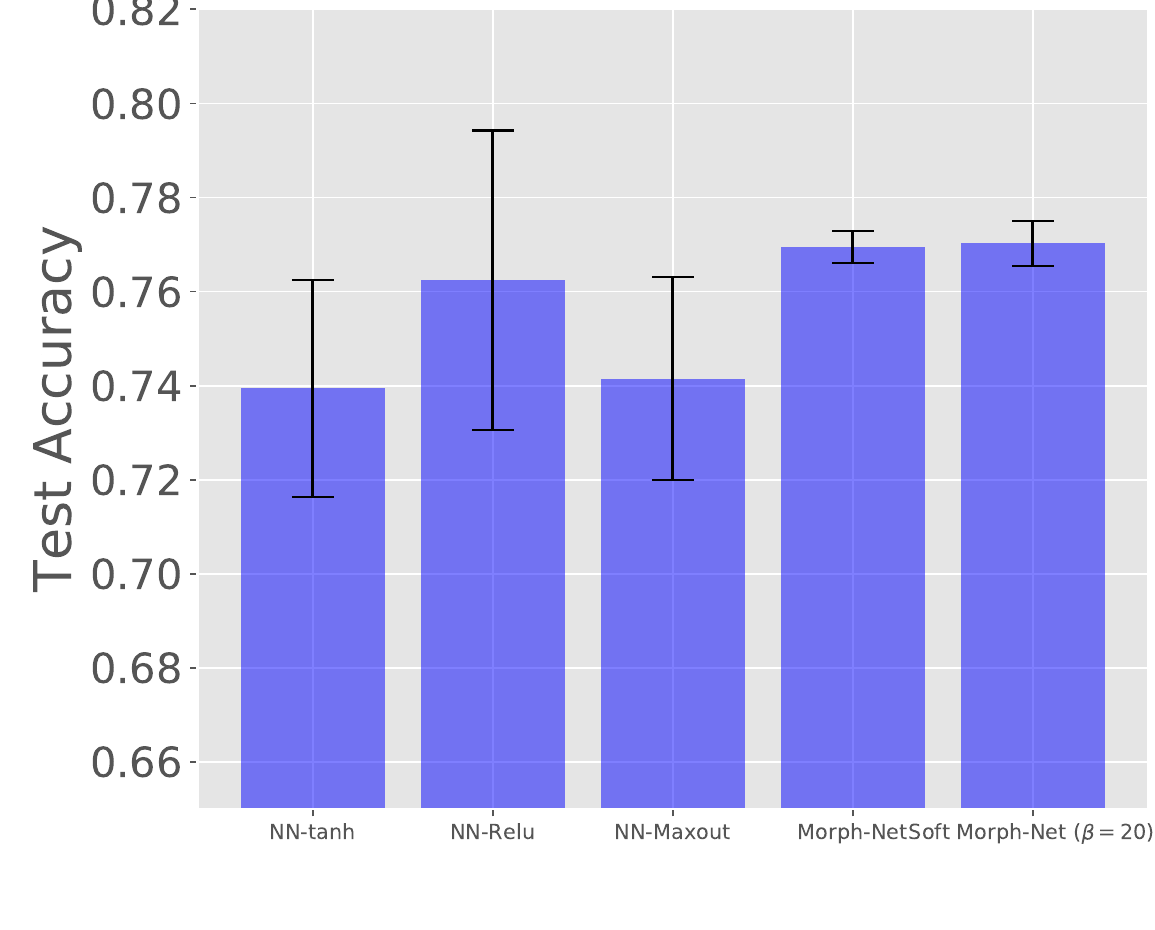} & \includegraphics[width=0.45\linewidth]{./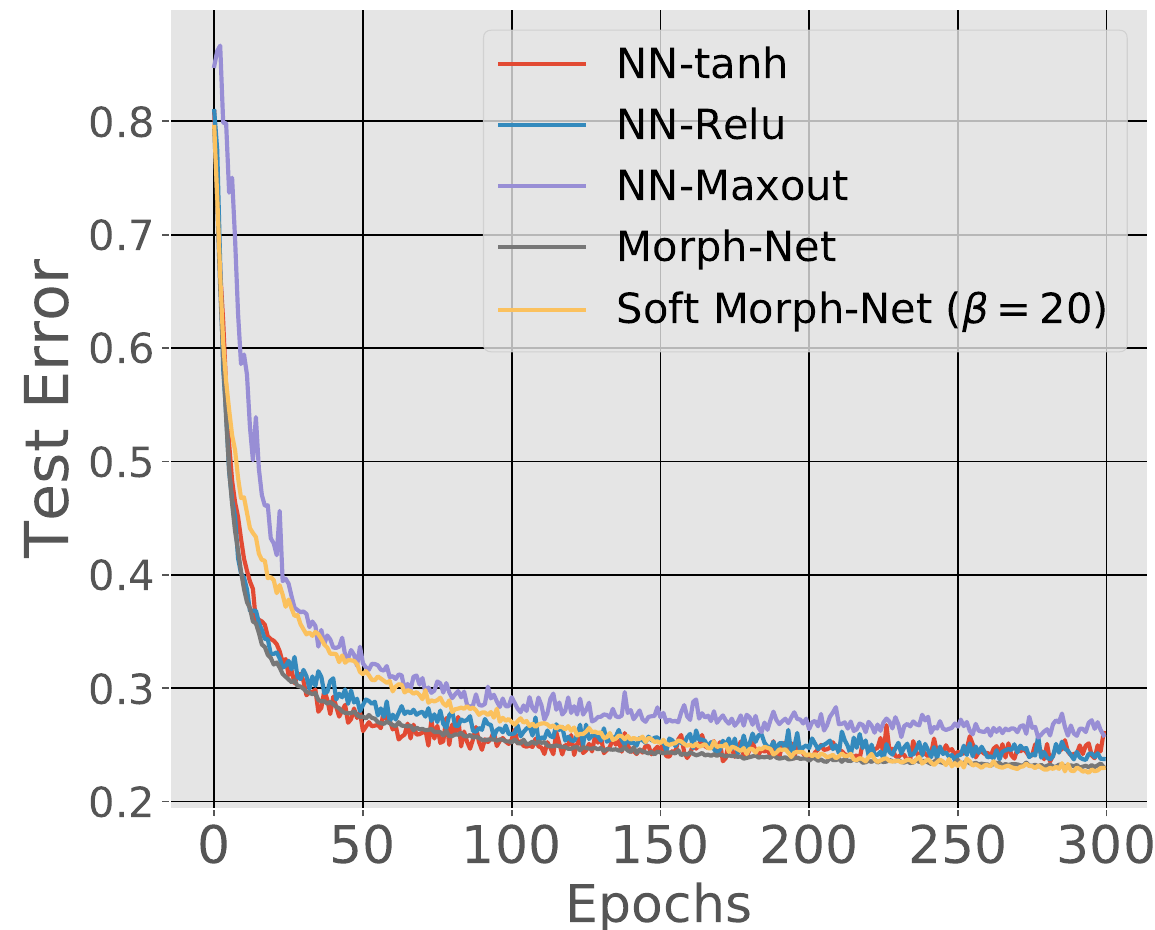} \\
         \includegraphics[width=0.47\linewidth]{./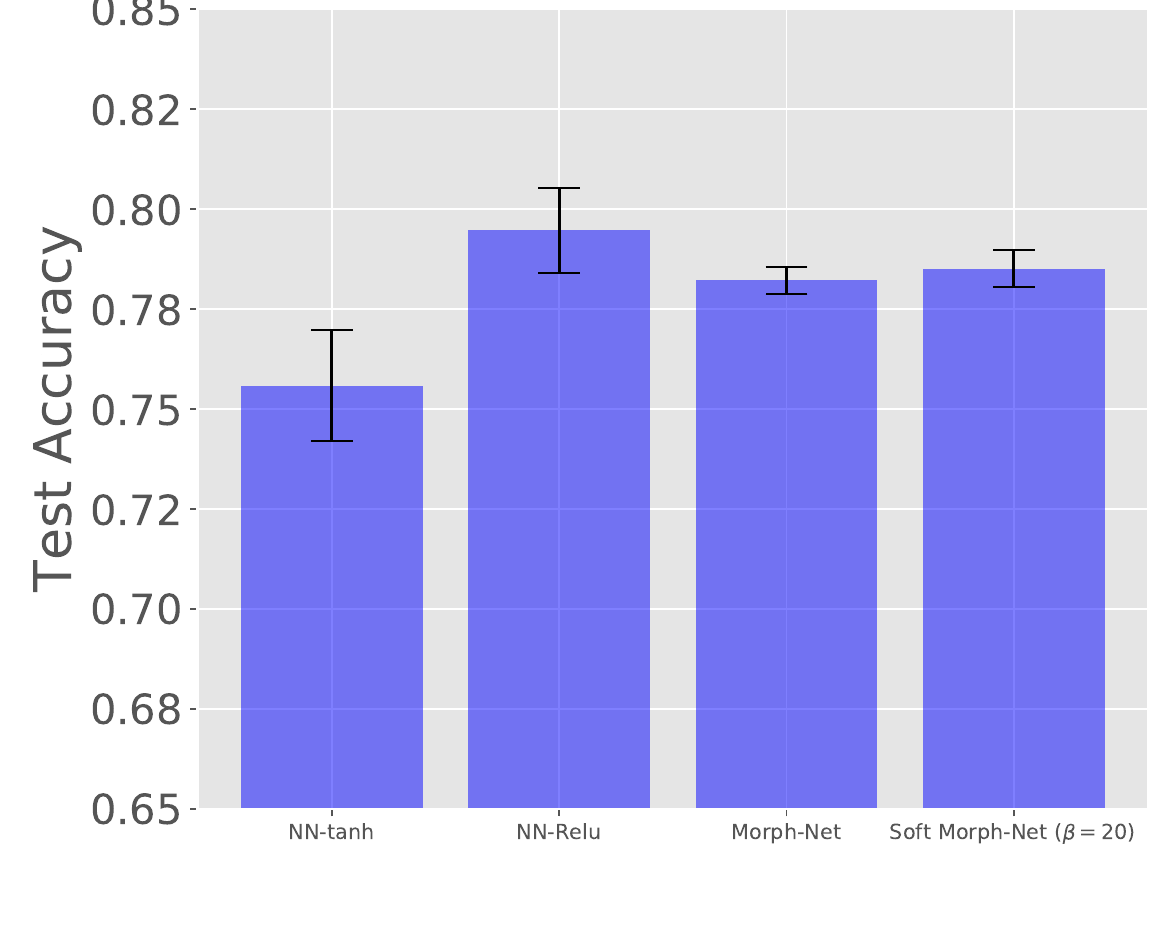} & \includegraphics[width=0.45\linewidth]{./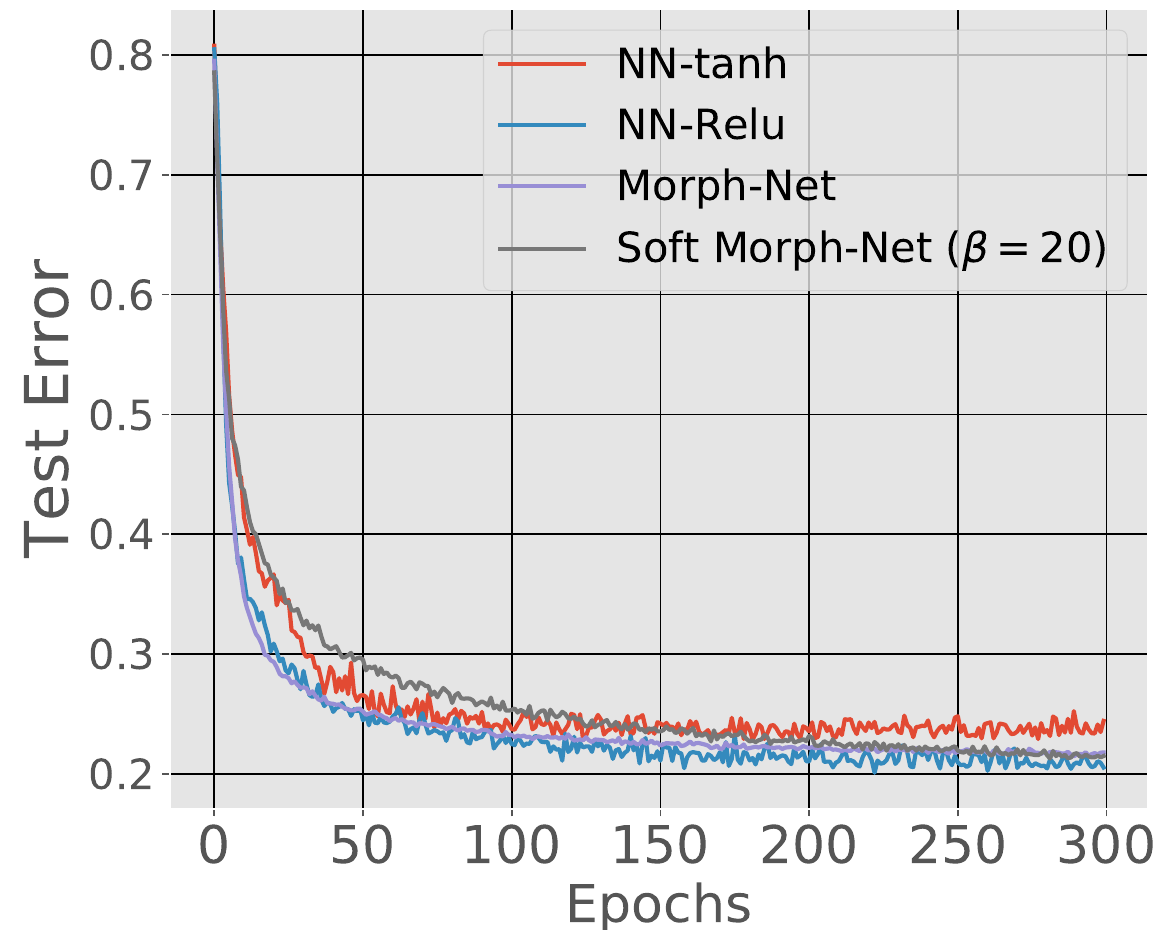} \\
         (a) Bar graph & (b) Test error vs epochs
    \end{tabular}
\caption{Results on SVHN dataset, varying number l=200 (1st row), l=400 (2nd row), l=600 (3rd row)}
\label{svhnex}
\end{figure}

\clearpage
